\newtheorem{theorem}{Theorem}
\newtheorem{proposition}{Proposition}
\colorlet{mygreen}{green!60!gray}
\newcommand{\BT}[1]{\textcolor{black}{{#1}}}
\newcommand{\CH}[1]{\textcolor{black}{{#1}}}
\newif\ifcomments
\newcommand{\BTcomm}[1]{\textcolor{mygreen}{Benedetta: {#1}}}
\newcommand{\TODO}[1]{\textcolor{red}{TODO: {#1}}}
\newcommand{\MB}[1]{\textcolor{red}{#1}}
\newcommand{\BTcomm}[1]{}
\newcommand{\TODO}[1]{}
\newcommand{\MB}[1]{}
\newcommand\ChangeRT[1]{\noalign{\hrule height #1}}
\begin{document}

\title{JMA: a General Algorithm to Craft Nearly Optimal Targeted Adversarial Examples}
%
%
%

\author{Benedetta Tondi,~\IEEEmembership{Senior Member,~IEEE}, Wei Guo, Niccol\`o Pancino,
 Mauro Barni,~\IEEEmembership{Fellow,~IEEE}
\thanks{B. Tondi, N.Pancino and M. Barni are from the Department of Information Engineering and Mathematics, University of Siena, 53100 Siena, Italy. W. Guo is from the Department of Electrical and Electronic Engineering, University of Cagliari, 09123 Cagliari, Italy.  Corresponding author: W. Guo (email: wei.guo.cn@outlook.com)}%
\thanks{This work has been partially supported by SERICS (PE00000014) under the MUR National Recovery and Resilience Plan funded by the European Union - NextGenerationEU
%
%
}
}

\markboth{Journal of \LaTeX\ Class Files,~Vol.~14, No.~8, August~2021}%
{Shell \MakeLowercase{\textit{et al.}}: A Sample Article Using IEEEtran.cls for IEEE Journals}


\maketitle

\vspace{-0.2cm}

\begin{abstract}
Most of the approaches proposed so far to craft targeted adversarial examples against Deep Learning classifiers are highly suboptimal and typically rely on increasing the likelihood of the target class, thus implicitly focusing on one-hot encoding settings. In this paper, a more general, theoretically sound, targeted attack is proposed, which resorts to the minimization of a Jacobian-induced  Mahalanobis distance term, taking into account the effort (in the input space) required to move the latent space representation of the input sample in a given direction. The minimization is solved by exploiting the Wolfe duality theorem, reducing the problem to the solution of a Non-Negative Least Square (NNLS) problem. The proposed algorithm (referred to as JMA) provides an optimal solution to a linearised version of the adversarial example problem originally introduced by Szegedy et al. The results of the experiments confirm the generality of the proposed attack which is proven to be effective under a wide variety of output encoding schemes. Noticeably, JMA is also effective in a multi-label classification scenario, being capable to induce a targeted modification of up to half the labels in complex multi-label classification scenarios, a capability that is out of reach of all the attacks proposed so far. As a further advantage, JMA requires very few iterations, thus resulting more efficient than existing methods.
\end{abstract}

\begin{IEEEkeywords}
Adversarial Examples, Deep Learning Security, Adversarial Machine Learning, Multi-Label Classification, Mahalanobis Distance,  Non-Negative Least Square Problems
\end{IEEEkeywords}


\section{Introduction}
\label{sec:introduction}
%
%
%
%
%
%
%


\IEEEPARstart{A}{dversarial} examples, namely, quasi-imperceptible perturbations capable to induce an incorrect decision, are a serious threat to deep-learning  classifiers \cite{szegedy2013intriguing,goodfellow2014explaining}. Since the publication of the seminal work by Szegedy et al. \cite{szegedy2013intriguing} in which the existence of adversarial examples was first observed, a large number of gradient-based methods have been proposed to implement adversarial attacks against Deep Neural Networks (DNNs) in white-box and black-box scenarios \cite{goodfellow2014explaining, papernot2016JSMA, papernot,li2024adversarial}. 

Most attacks create the adversarial examples by minimizing a cost function subject to a constraint on the maximum perturbation introduced in the image.
Moreover, they focus on single-label classifiers based on one-hot encoding. In this setting,
the final activation layer consists in the application
of an activation function (usually a softmax) and a normalization, mapping the last layer outputs, called logits,
into a probability vector,
associating a probability value to
each class.
%
%
The loss function corresponds to the
categorical cross entropy. As a consequence,
in the targeted case, adversarial examples focus on raising the probability value of the target class.
%
This is obviously the best strategy
with single-label classification
however, this strategy is not optimal in general, e.g., in the presence of output encoding schemes based on channel coding, and in the case of multi-label classification.
A more flexible attack working at the logits level has been proposed by Carlini and Wagner  \cite{carlini2017towards}.
The attack works by decreasing the difference between the largest logit and the logit of the target class (in the targeted case).
Working at the logits level allows to avoid vanishing gradient problems \cite{hochreiter2001gradient}, hence Carlini and Wagner method often yields better performance compared to methods that directly minimize the loss term.
However, working on two logits at a time, without considering the effect of the perturbation on the other logits, is clearly suboptimum in general, with the consequence that the approach in \cite{carlini2017towards}  often results in a greedy, lengthy, iterative minimization process.
The  problem is  more evident when the attacker aims at attacking a DNN adopting different encoding mechanisms, like Error Correction Output Coding (ECOC) \cite{verma2019ecoc}, and also multi-label DNNs. In this case, all the logits have to be modified simultaneously.

Crafting adversarial examples for networks adopting a generic output encoding scheme,
and  for  multi-label classification, is a very challenging task, that, to the best of the authors' knowledge,
has not been given much attention, with the exception of some scattered works proposing suboptimum approaches  \cite{song2018multi, zhou2020generating}, \CH{\cite{Mahmood_2024_CVPR}}.
 From the attacker side, the difficulties of applying an optimal attack in this case are due to the fact
that the labels are not mutually exclusive (like in the one-
hot case), and decreasing or increasing the network output
in correspondence of some nodes may change the values
taken by the other nodes in an unpredictable way.

To overcome the above drawbacks, 
we introduce a new, theoretically sound, targeted adversarial attack,
named Jacobian-induced Mahalanobis distance Attack
(JMA).
JMA solves the original formulation of the adversarial example problem introduced in \cite{szegedy2013intriguing},
which aims at minimizing the strength of the adversarial perturbation subject to the constraint that the image is classified as belonging to the target class.
%
The solution is found via a two-step procedure: i) given a target point in the output space, the optimum perturbation moving the input sample to the target point is first determined, under a linear assumption on the effect of the distortion on the model output; ii) then,
%
%
the target point that minimizes the perturbation introduced by the attack is determined by  minimizing the Mahalanobis distance induced by the Jacobian matrix of the network input-output function.
By exploiting the Wolfe duality theorem, the problem is reduced to the solution of a Non-Negative Least Square (NNLS) problem, that can be efficiently solved numerically. The optimal perturbation derived in this way is then applied to the input image.
%
%
%
Ideally, JMA should produce the adversarial image in just one-shot. However, due to local optimality, in practice, it is sometimes necessary to carry out some iterations to obtain a valid adversarial example, recomputing the Jacobian matrix every time.

The results of the experiments confirm that the proposed attack is more efficient than state-of-the-art attacks from a computational perspective, requiring lower iterations to carry out the attack. State-of-the-art attacks are also outperformed in terms of distortion and attack success rate.
%
%
The effectiveness of the proposed adversarial attack method is particularly evident when it is used to attack networks adopting ECOC
and in particular in  multi-label classification scenarios, where JMA
is capable to simultaneously change in a desired way up to 10 out of 20 labels
of the output label vector, a capability which is out of reach
of the algorithms proposed so far.
%
We also verified the very good behavior of JMA
in the one-hot encoding scenario, where it  achieves  performance that are comparable to the state-of-the-art with a significant reduction of the computing time.
In summary, the main contributions of this paper are:
\begin{itemize}
\item We propose 
a new targeted adversarial attack for general classification frameworks,
named Jacobian-induced Mahalanobis distance Attack (JMA).
The attack resorts to the minimization of a Jacobian-induced Mahalanobis distance term, with the Jacobian matrix taking into account the effort (input space) to move the latent space representation of the input sample in a given direction.  In theory, the algorithm is one-shot.
\item We solve 
the constrained  minimization 
of the  Mahalanobis distance 
by exploiting the Wolfe duality theorem, reducing the problem to the solution of a non-negative least square (NNLS) problem, that can be solved numerically.
\item We validate
the new attack 
on various datasets (CIFAR-10, GTSRB, MNIST, VOC2012\CH{, MS-COCO, NUS-WIDE, and ImageNet}) and networks adopting different encoding schemes, namely  ECOC, multi-label, and one-hot encoding. \CH{Validation involves different types of architectures, including CNNs, Transformers and large visual models}.  The experiments confirm that the proposed attack is very general and can work in all these cases, being very efficient, often  requiring only very few iterations to attack an image.
\item 
We compare our new approach against 
several state-of-the-art attacks, showing that JMA is more efficient and requires less iterations.
In particular, in the multi-label classification scenario,  JMA is capable to change up to half labels of the output vector, even when the number of labels increases.
\end{itemize}

The rest of the paper is organized as follows: \CH{Section \ref{sec.relatedwork} gives an overview of the related work.
Section \ref{sec.background}} provides the main formalism and introduces the various classification frameworks. In Section \ref{sec.adv-attack}, the most relevant adversarial attacks \CH{that can be applied against
DNNs adopting output encoding} are reviewed.
Then, in Section \ref{sec.proposed}, the details of the JMA adversarial attack are described. The experimental methodology and setting are described in Section \ref{sec.methods}, while section \ref{sec.results} 
 reports and discusses the results of the experiments. The papers ends in Section \ref{sec.con} with some final remarks.

\section{\CH{Related works}}
\label{sec.relatedwork}

\CH{Adversarial attacks can be categorized in two main groups, namely white-box and black-box methods \cite{li2024adversarial}. }

\CH{Starting from Szegedy et al. \cite{szegedy2013intriguing} work, research on white-box attacks has mainly focused on the development  of gradient-based approaches 
that can reduce the complexity of the attack. Many greedy algorithms have been proposed that permit to find an effective adversarial example in a reasonable amount of time \cite{goodfellow2014explaining,papernot2016JSMA,kurakin2016physical,madry2017towards,moosavi2016deepfool,carlini2017towards}.
The Fast Sign Gradient Method (FGSM) method \cite{goodfellow2014explaining}
obtains an adversarial perturbation in a computationally efficient way by considering the sign of the gradient of the output with respect to the input image.
%
%
%
%
%
%
%
%
%
In \cite{kurakin2016physical}, an iterative version of FGSM is introduced by  applying  FGSM  multiple  times,  with  a  smaller  step  size,  each time by recomputing the gradient. This method is  often referred to as I-FGSM, or Basic Iterative Method (BIM).
Another attack similar to FGSM is the projected gradient  descent  (PGD) attack \cite{madry2017towards}, that can be regarded to as a multi-step extension of the FGSM attack where the clip operation performed by BIM on the gradient (to force the solution to stay in the [0,1] range) is replaced by gradient projection.  
Carlini and Wagner (C\&W) \cite{carlini2017towards} propose a more flexible method, working at the logits level, which can mitigate the gradient vanishing problem and improve performance in many cases.
Recently,  the AutoPGD and AutoAttack \cite{croce2020reliable} attacks have also been proposed. Such attacks improve PGD by designing an approach that automatically chooses the most suitable step size and perturbation size (in the case of AutoAttack) at every attack iteration, based on the behaviour of the objective function, and by using a different loss function, which improves performance and reduces the gradient vanishing problem.
Among the other  white-box gradient-based attacks it is worth mentioning
the Jacobian-based Saliency Map Attack (JSMA) \cite{papernot2016JSMA} 
%
%
and the DeepFool attack \cite{moosavi2016deepfool}.
JSMA \cite{papernot2016JSMA}  consists of a greedy iterative procedure which relies on forward
propagation to compute, at each iteration, a saliency map, indicating the
pixels that contribute most to the classification, while
DeepFool \cite{moosavi2016deepfool}  is an efficient iterative attack that considers  the  minimal perturbation with respect to a linearized classifier, stopping the attack when the boundary is crossed.}

\CH{A large segment of recent literature has also focused on the development of attacks that can work in real-world settings where the exact victim model is unknown. This includes transfer-based attacks, e.g.  \cite{papernot, weng2023logit, yang2024quantization}, which retain part of their effectiveness even against DNN models other than the one targeted by the attack, as well as black-box attacks operating under the assumption that the victim model can be queried a limited number of times \cite{queryTIFS23,zhang2024perception}.
}

\CH{All the above methods focus on single-label classifiers
adopting one-hot encoding schemes. The  problem of crafting adversarial attacks against classifiers adopting different output encoding schemes  is a challenging one that, to the best of the authors' knowledge, has not been much studied, with the exception of a few scattered works.  Song et. al
\cite{song2018multi} first addressed this problem and extended  the C\&W and the DeepFool attacks to a multi-label setting. These extended algorithms are referred as ML-C\&W and ML-DF.  Inspired by ML-DF, another approach
to implement an adversarial attack in the multi-label case, named Multi-Label Attack via Linear
programming (MLA-LP), has been proposed in \cite{zhou2020generating}. 
Recently, \cite{Mahmood_2024_CVPR} proposed a fundamentally different multi-label attack method that enforces semantic consistency across all predicted labels in the adversarial image. The approach introduced in \cite{Mahmood_2024_CVPR} leverages an efficient search algorithm over a knowledge graph that captures label dependencies.}
\CH{Beyond multi-label classification, a white-box attack tailored against DNNs adopting the ECOC encoding scheme has been proposed in \cite{zhang2020challenging}.}
\CH{A completely different approach, that can be applied to any network regardless of the encoding scheme, often leading to a large attack distortion in the input space,  is the Layerwise Origin-Target Synthesis (LOTS) attack \cite{rozsa2017LOTS}.}
\CH{Due to their relevance for our research, the attacks targeting classifiers adopting output encoding are presented in detail in Section \ref{sec.ECOC}.}

\section{Background and Notation}
\label{sec.background}





In this section, we review the main output encoding schemes for DNN classifiers.

Let $x \in \mathbb{R}^m$ denote the input of the network, and $y(x)$ (or simply $y$) the class $x$ belongs to. 
Le $l$ be the number of classes.
To classify $x$, the neural network first maps $x$ into a reduced dimensional space $\mathbb{R}^n$ ($n < m$). Every class is associated to an output label column vector $c_k = (c_{k1},c_{k2},...,c_{kn})^T$. Let  $f(x): \mathbb{R}^m \rightarrow \mathbb{R}^n $ be a column vector indicating the end-to-end neural network function, and $f_i(x)$ the $i$-th element of the vector.
Classification in favour of one of the $l$ classes is obtained by applying a function $\phi$ to $f(x)$, the exact form of $\phi$ depending on the output encoding scheme used by the network.
%
%
In the following, $z = (z_1,z_2,...,z_n)^T$ denotes the vector with the logit values,
that is, the values of the network nodes before the final activation function, which is responsible to map the output of the penultimate layer of the network into the [0,1] (sometimes [-1, 1]) range. Given an image $x$, the notation $f^{-j}(x)$ is used to refer to the internal model representation at layer $L-j$ ($L$ being the total number of layers of the network). With this notation, $z(x) = f^{-1}(x)$.



\subsection{One-hot encoding}
In the case of networks adopting the one-hot encoding scheme, the number of output nodes corresponds to the number of classes ($l = n$), and  $c_k$ is a binary vector with all 0's except for position $k$, where it takes value 1.
In this case, the length of
$z$ is $l$, and the logits are directly mapped onto the output nodes through a
softmax function as follows:
\begin{equation}\label{eq:softmax}
f_k(x) =  \frac{\exp(z_k)}{\sum_{i=1}^{l}\exp(z_i)},
\end{equation}
for $k = 1,..,l$.
This allows to interpret $f_k(x)$ as the probability assigned to class
$k$,
and the final prediction is made by letting $\phi(x) = {\arg\max}_{k} \hspace{0.1cm} f_k(x)$.


Typically, in the one-hot encoding case, training is carried out by minimizing the categorical cross-entropy loss defined as $\mathcal{L}(x,y) = - \sum_{i = 1}^l c_{yi} \log(f_i(x)) = - \log (f_y(x))$.


\subsection{Error-correction-output-coding (ECOC)}

Sometimes the output classes are encoded by using the codewords of a channel code. In this way, error correction can be applied to recover from incorrect network behaviors. In this case,  the network output dimension $n$ corresponds to the length of the codewords
and $c_k$ is the codeword associated to the $k$-th output class. The number of classes is typically  less than $2^n$ ($l < 2^n$).
An example of the use of channel coding to define the class label vectors is given in \cite{dietterich1994solving} (ECOC).
The use of ECOC has also been proposed as a way to improve the robustness against  adversarial attacks in a white-box
setting \cite{verma2019ecoc}.
%
The rationale is the following: while with classifiers based on standard one-hot encoding the
attacker can induce an error by modifying one single logit
(the one associated to the target class), the final decision of
the ECOC classifier depends on multiple logits in a complicated
manner, and hence it is supposedly more difficult to
attack.
%
%

Formally, with ECOC, a distinct codeword $c_k$ is assigned to every output class.
Let $C= \{c_1,c_2,\cdots, c_l\}$ define the codebook, that is, the matrix of codewords.
Each element of  $C$ can take values in  $\{-1,1\}$.
%
To compute the output of the network the logits are mapped into the $[-1, 1]$ range by means of an activation function $\sigma()$, that is $f(x) = \sigma(z)$, where $\sigma()$ is  applied element-wise to the logits. A common choice for $\sigma()$ is the $\tanh$ function.
To make the final decision, the probability of class $k$ is first computed:
\begin{equation}\label{eq:ecoc_map}
p_k(x) = \frac{\max(f(x)^T c_k, 0)}{\sum_{i=1}^{l} \max(f(x)^T c_i,0)},
\end{equation}
where $f(x)^T c_k$ is the inner product between $f(x)$ and $c_k$.
Since $c_{ij}$'s take values in  $\{-1,1\}$, the {\em max} is necessary to avoid negative probabilities.
Then,  the model's final prediction is given by $\phi(x) = {\arg\max}_{k} \hspace{0.1cm} p_k(x)$.
Note that Eq. \eqref{eq:ecoc_map} reduces to Eq. \eqref{eq:softmax} in the case of one-hot encoding, when ${C} = \mathbf{I}_{l}$ and where $\mathbf{I}_{l}$ is the identity $l \times l$ matrix.
In the ECOC case, training is usually carried out by minimizing the hinge loss function, defined as $\mathcal{L}(x,y) = \sum_{i=1}^n \max(1- c_{yi}  f_i(x), 0)$.

\subsection{Multi-label classification}
\label{sec.multi-label}

Multi-label classification is the scenario wherein the classifier is asked to decide about the presence or absence within the image of $n$ image characteristics or features. For example, the classifier may be asked to detect the presence of $n$ possible classes of objects, or decide about $n$ binary attributes (like indoor/outdoor, night/day, sunny/rainy). The presence/absence of the looked-for features is encoded by the $n$ outputs of the network
(see, for instance, \cite{Devil, wang2016cnn}). In general, the number of possible outputs is $l = 2^n$, and each output is encoded in a matrix $C$ having size $n \times 2^n$. In fact, it is possible that some labels' combinations are not feasible. In such a case, $C$ contains only the allowed combinations, somewhat playing the same role of channel coding. 
In this paper, multi-label classification indicates a situation where all combinations of labels are possible.

In the case of multi-label classification, given the logit vector $z$, the activation function $\sigma()$ is applied element-wise to the components of $z$, and the  prediction on each label component is made component-wise.
Assuming that  $c_k \in \{0,1\}^n$ for any $k$, and that the logistic function is applied to the logits, decoding corresponds to 0.5-thresholding each output score independently\footnote{If $c_k \in \{-1,1\}^n$,   tanh activation  followed by 0-thresholding is applied.}.
%
For the loss function, a common choice is the (multi-label) binary cross-entropy loss
$\mathcal{L}(x,y) = -  \sum_{i = 1}^n (c_{yi} \log(f_i(x)) + (1-c_{yi}) \log(1- f_{i}(x)))$.

\section{Adversarial attacks against DNNs} 
\label{sec.adv-attack}


The vulnerability of deep neural networks (DNNs) to adversarial samples has been first pointed out by Szegedy et al. in \cite{szegedy2013intriguing}. 
When the goal of the attacker is to induce a generic misclassification, \CH{the attack is referred to as untargeted}
\CH{Conversely, if the misclassification aims at a specific class, the attack is referred to as targeted.}
According to \cite{szegedy2013intriguing}, the generation of a targeted adversarial example can be formally described as
%
\begin{equation}\label{Eq:adv def}
	\begin{aligned}
		&\textrm{minimize} \quad ||\delta||_2 \\
		&\textrm{s.t.} \quad \phi(x+\delta) = t \neq \phi(x)\\
		&\textrm{and} \quad x+\delta \in [0,1]^m
	\end{aligned},
\end{equation}
where  $ ||\delta||_2 $ denotes the $ l_2 $ norm of the perturbation $ \delta $, and $ t $ is the target class of the attack. The same formulation holds for the untargeted case, with the first constraint replaced by $\phi(x+\delta) \neq \phi(x)$.
The constraint $ x+\delta \in [0,1]^m $ makes sure that the resulting adversarial example is a valid input.




\subsection{Basic adversarial attacks}
\label{sec.basic-attack}

Solving the minimization in \eqref{Eq:adv def} is generally hard, then the adversarial examples are determined by solving the simplified problem
where a functional $\mathcal L({x+\delta}, t) + \lambda \cdot \left\|  \delta \right\|_2^2$ is minimized,
    where $\mathcal L({x+\delta}, t)$ is the loss function (usually, the categorical cross entropy loss) under the target class $t$, and $\lambda$ is a parameter balancing the two terms. This problem  can be solved by the box-constraint L-BFGS method \cite{szegedy2013intriguing}. Specifically, a line search is carried out to find the value of $\lambda > 0$ for which the solution satisfies the adversarial condition, that is $\phi(x+\delta) = t$.
It is immediate to see that, in the one-hot case,  when the categorical cross entropy loss is considered,  only the target class contributes to the loss term, and then the attack only cares about increasing $f_t(x)$, regardless of the other scores, all the more that, due to the presence of the softmax, this also implies decreasing the other outputs.
To  reduce the  complexity of the L-BFGS attack, several suboptimal solutions have been proposed, considering the  problem of minimizing the loss function $\mathcal L$ subject to a constraint on the perturbation $\delta$,
\CH{like I-FGSM (or BIM) \cite{kurakin2016physical} and PGD \cite{madry2017towards}.}
A  flexible and efficient method working at the logits level has been proposed by
Carlini and Wagner \cite{carlini2017towards}.
In the C\&W attack in fact,  the  loss  term  is  replaced  by  a  properly  designed function  depending on the difference between the logit of the  target  class  and  the largest  among the logits of the  other  classes.
The perturbation is kept in the valid range by properly modifying the objective function.
In this way, the following unconstrained problem is considered and solved (for the case of $L_2$  metric):
    	\begin{align}\label{eq:CW}
       \min_{w} & \hspace{0.2cm} ||\delta(w)||_2^2 +  \nonumber\\
       &  \lambda \cdot \max(\max_{i \neq t} z_i(x + \delta(w)) - z_t(x + \delta(w)), - \kappa),
    	\end{align}
where $\delta(w) = 1/2 (\tanh(w) + 1) - x$ is the distortion introduced in the image, and the constant parameter $\kappa > 0$ is used to encourage the attack to generate a high confidence attacked image.
Since logits are more sensitive to modifications of the input than the probability values obtained after the softmax activation C\&W attack is less sensitive to vanishing gradient problems.
However,  by considering only two logits at time, and neglecting the effect of the perturbation on the other logits, this approach is particularly suited to the one-hot case and is highly suboptimum when other output encoding schemes are used.

\subsection{Attacks against DNNs with output encoding}
\label{sec.ECOC}

The adversarial attack algorithm and methods described above focus on models based on one-hot encoding. This is a favorable scenario for the attacker, who only needs to focus on increasing the output score (or logit) of the target class $t$. This is not the case with DNNs based on different output encoding schemes.
The simplest examples are multi-label classifiers.
From the attacker's side, the difficulties of carrying out the attack in this case are due to the fact that the labels are not {\em mutually exclusive}, and decreasing or increasing the network output in correspondence of some nodes may change the values taken by the other nodes in an unpredictable way. In addition, the attacker may want to modify more than one output label in a desired way. For instance, he may want to induce the classifier to interpret a {\em daylight} image showing a {\em car} driving in the {\em rain}, as a {\em night} image of a {\em car} driving with {\em no rain}.
This prevents the application of the basic adversarial algorithms described in the previous section.

\CH{As mentioned in Section \ref{sec.relatedwork} 
} the problem of crafting adversarial examples against  multi-label classifiers has been  addressed in \cite{song2018multi}, where the C\&W  and the DeepFool attacks have been extended to work in a multi-label setting.
The extension of C\&W attack, 
\CH{hereafter referred to as  ML-C\&W, }
works as follows: for every output node,  a hinge loss term  similar to  the one in Eq.~\eqref{eq:CW} is considered in the minimization.
%
More specifically, the following term is added to the minimization:  $\sum_{i=0}^{n}\max(0, \gamma-c_{ti} \cdot  z_i(x + \delta))$,
where $c_t$ is the target label vector and $\gamma$ is a confidence parameter.
%
%
%
%
Note that the extension implicitly assumes that decoding is carried out element-wise.
This is not true when all labels' combinations are possible and with schemes adopting output channel coding.
However, by working at the logits level, the  C\&W attack is  general and can also be extended to work  with different output encoding schemes (as discussed below).
%

\CH{The extension of the DeepFool attack (ML-DF) works as follows:
instead} of targeting an objective function minimizing the distortion, ML-DF looks for the minimum perturbation that allows to enter the target region under a linear assumption on the model behavior.
ML-DF works at the output score level, and hence is more prone to the vanishing gradient problem than ML-C\&W.
Given the threshold vector ${\rho} = ({\rho}_1, {\rho}_2,..,{\rho}_l)^T$
%
for the score level output ($\rho_i = 0.5$ when $f_i \in [0,1]$,  $0$  when $f_i \in [-1,1]$),  a vector with the distance to the boundary is derived and used to compute the perturbation.
The perturbation obtained in this way is applied to the image. ML-DF  applies the above procedure iteratively until the attack succeeds or the maximum number of iterations is reached.
%
%
 ML-DF is a  greedy method and often leads to very large distortions; Moreover,  the performance obtained
 %
 are always inferior  with respect to ML-C\&W \cite{song2018multi, zhou2020generating}.

\CH{MLA-LP, which also implements an adversarial attack for the multi-label case, works as follows.}
By assuming that the loss changes are linear for small distortions, a simplified formulation of the attack is solved to  minimize
the $L_{\infty}$ distortion introduced in the image subject to a
constraint on the  loss value, requiring that the final loss is lower than the threshold value.
Thanks to the adoption of the $L_{\infty}$ distortion, the problem can be easily solved by linear programming methods.
Similarly to ML-DF, MLA-LP works at the final score output level.
By considering the loss function instead of the model output function, MLA-LP tends to modify less the confidences of non-attacked labels with respect to ML-DF, with a lower distortion of the attacked image.

%
%
%
%

When the output coding scheme is based on ECOC, an adversarial attack can be carried out by extending to this setting the C\&W method.
Specifically, the C\&W attack can be applied at the probability level, exploiting the specific mapping of the logits to probabilities.
In particular, C\&W attack can be used to attack networks adopting ECOC output encoding by replacing the logit terms  $z_i(x)$ with $f(x)^T c_i$, that is, by considering the loss term $ \max(\max_{i \neq t} f(x + \delta(w))^T c_i - f(x + \delta(w))^T c_t, - \kappa)$.

In the case of an ECOC-based network,
adversarial examples can also be obtained by using the attack described in
  \cite{zhang2020challenging}  (in the following, this method is referred to as ECOC attack).
%
%
Such an attack incorporates within the minimization problem the ECOC decoding procedure.
Formally, the optimization problem solved in \cite{zhang2020challenging} is defined as:
%
%
    	\begin{align}\label{eq:ECOCattack}
       \min_{\delta} & \hspace{0.2cm} \left(||\delta||_2^2 -  \lambda \cdot \min_{i} (c_{ti} \cdot z_i(x + \delta), \eta)\right),
    	\end{align}
where  $\eta$ is a constant parameter setting a confidence threshold for the attack.
%


Although not specialized for this case, ML-C\&W  can also be used to attack ECOC-based networks. Instead,  ML-DeepFool and MLA-LP, that work at the score output level and  requires knowledge of the  threshold vector $\rho$, cannot be extended to the case of networks adopting an output encoding scheme like ECOC. In fact, the threshold vector  $\rho$ is not available in the ECOC case since the decision is made after the correlations are computed. A straightforward,  highly suboptimum, way to apply these methods to the ECOC case is by setting  $\rho = (0,0,...,0)^T$, as if all the codewords were possible, and performing the decoding element-wise.

\subsubsection{The LOTS attack}

A completely different kind of attack, which  can be applied to any network regardless of the output encoding scheme is  the Layerwise Origin-Target Synthesis (LOTS) attack \cite{rozsa2017LOTS}.
%
%
LOTS generates the adversarial example by modifying the input sample so that its representation in the feature space is as similar as possible to that of a given target sample. Formally, given the internal representation $f^{-j}(x)$  of an image $x$ at layer $L-j$
and a target internal representation $f^{-j}(x_t)$, LOTS attempts to minimize the term:
\begin{equation}
|| f^{-j}(x^t) - f^{-j}(x)||^2.
\end{equation}
At each iteration, the algorithm updates the input sample  as
%
\begin{equation}
x^{(i)} = x^{(i-1)} - \frac{\nu^{-j}(x, x^t)}{\max_x(|\nu^{-j}(x, x^t)|)}
\end{equation}
where $\nu^{-j}(x, x^t) = \nabla_x \left(|| f^{-j}(x^t) - f^{-j}(x)||^2\right)$, and $|\cdot|$ is applied  element-wise,
until the Euclidean distance between $f^{-l}(x)$ and the target is smaller than a predefined threshold.
The final perturbed image $x^{adv}$ obtained in this way has an internal representation at layer $L-j$ that mimics that of the target sample $x^t$.
In LOTS, the target point $x^t$ is any point belonging to the target class of the attack. 
It is worth noticing that in general $x^t$ may not be available, e.g. in the multilabel case when the attacker targets an arbitrary labels' combination.

When applied to the logits level ($f^{-1}$), LOTS simultaneously modifies all the logits. However, by focusing on the minimization of the Euclidean loss,
the distortion introduced  in the input space is not controlled.\\

\subsubsection{JMA and prior art}
In contrast to LOTS \cite{rozsa2017LOTS}, JMA defines the target point in the feature space that minimizes perturbation necessary to move the feature representation of the input sample to the target point.
It does so, by relying on the Mahalanobis distance induced by the Jacobian matrix of the neural network function so to take into account the different effort, in terms of input perturbation, required to move the sample in different directions. 
%
ML-DF and MLA-LP \cite{song2018multi, zhou2020generating} are also based on the Jacobian matrix, however,  in  \cite{song2018multi}, the Jacobian matrix is used to implement a greedy algorithm, while in \cite{zhou2020generating} is used to solve a suboptimum formulation of the attack. 
With JMA, instead,
the original formulation of the attack by Szegedy et al. is solved under a first-order approximation. 
With such an approximation, in fact, the solution of the attack problem can be reduced to the solution of a constrained quadratic programming problem (where the objective function is a Mahalanobis
distance term induced by the Jacobian).
Moreover, our algorithm is a general one and can work either at the output level or at the logits level, regardless of the output encoding scheme used.

\section{The JMA  attack}
\label{sec.proposed}

As mentioned in Section \ref{sec:introduction}, 
our method is designed to operate at the logits level.
In fact, while modifying one or two logits at a time
allows to carry out a close-to-optimal adversarial attack in the one-hot encoding case \cite{carlini2017towards}, in the more general case of DNNs based on output encoding,  this approach is highly suboptimum.
%
In this scenario, in addition to considering all the logits simultaneously, as done in the ML-C\&W attack, the correlation among the logits and the effort required in the input space to move the input sample in a given direction (in the logit or feature space) must be considered.
 JMA tackles the above problems and finds the optimal adversarial image, by solving the optimization problem in Eq. \eqref{Eq:adv def}, under a first-order approximation of the behavior of the network function.

In the following, we look at 
the output of the neural network function $f$ \footnote{For ease of notation, in the following, the model output is considered, however the same procedure can be applied at the logit level.}
  as a generic point in $\mathbb{R}^n$.
 According to Eq. \eqref{Eq:adv def},
 the optimum target point for the attacker corresponds to the point in the target decision region that can be reached by introducing  the {\em minimal} perturbation in the image $x$. Such a point is not necessarily the point in the target decision region closest to the current output point $f(x)$  in the Euclidean norm, since evaluating the distance in the output space does not take into account the effort necessary to move the input sample into the desired output point.
 %
Let $x_0$ be the to-be-attacked image and let us denote with $\delta$ the perturbation applied by the attacker.
Let $r = f(x_0) + d$ \CH{denote a generic  point in the output space}.
The goal of the attacker is to determine the displacement  $d$, in such a way that $r$ lies inside the decision region of the target class, and for which the Euclidean norm of the perturbation $\delta(d)$ necessary to reach $r$
is minimum, that is:
%
%
%
\begin{align}
\label{minimization-d}
		\min_{d: \phi(f(x_0) + d) = t} \left[ \min_{\substack{\delta : f(x_0 + \delta) - f(x_0) = d\\
												x_0 + \delta \in [0,1]^m}	} 					
		\hspace{0.2cm} ||\delta(d)||_2 \right].
\end{align}
\CH{The proposed attack works in two steps:
 \begin{list}{}{}
 	\item 1. 
    the optimum perturbation $\delta^*(d)$ moving the input sample to the (generic) target point $r$ is determined; 
 	\item 2.  we find the optimum target point $r^{*}$, i.e., the optimum displacement $d^*$ (where $r^* = f(x_0) + d^*$) by solving a constrained quadratic programming problem. The resulting $\delta^*(d^*)$ gives the minimum perturbation necessary to reach the target region.
 \end{list}
}
 
\CH{In the following, we first rewrite the minimization in a more convenient way, then we solve Step 1 and 2 under a first order approximation.}
Without loss of generality, we will assume that all the label codewords have the same norm, and that $\phi()$ applies a minimum distance decoding rule. In this case, forcing $\phi(f(x_0) + d) = t$ is equivalent to impose the following conditions:
%
\begin{equation}
 \label{expression-constraint}
r^T  c_t  \ge   r^T  c_i, \quad i = 1, 2 \dots l,
\end{equation}
%
that  can be rewritten as a function of $d$ as
%
\begin{equation}
d^T(c_i - c_t) \le g_{ti}, \quad i = 1, 2 \dots l,
\end{equation}
 where $g_{ti} = (f(x_0)^T c_t  -  f(x_0)^T c_i)$,
and hence the optimization problem in Eq. \eqref{minimization-d} can be rewritten as:
%
\begin{align}
\label{minimization-d2}
		\min_{\substack{d: d^T(c_i - c_t) \le g_{ti} \\ \forall i, i = 1,\cdots,l}} \bigg [ \min_{\delta : f(x_0 + \delta) - f(x_0) = d} 					
		\hspace{0.2cm} ||\delta(d)||_2 \bigg ].
\end{align}
Note that we neglected the constraint $x_0 + \delta \in [0,1]^m$, trusting that if $\delta$ is small the constraint is always satisfied. We will then  reconsider the effect of this constraint at the end of our derivation in Section \ref{sec.discussion}.

Under the assumption that the input perturbation is small (which should always be the case with adversarial examples), we can consider a first order approximation of the effect of the perturbation $\delta$ on the network output:
%
\begin{equation}
\label{eq.foapprox}
f(x_0 + \delta) \simeq f(x_0) +   J_{x_0} \delta,
\end{equation}
where $J_{x_0}$ denotes the Jacobian matrix of $f$ in $x_0$\footnote{$f$ can describe the output of the network or the logits (strictly speaking, in the latter case, $f$ should be replaced by $f^{-1}$)}:
\begin{equation}
J_x = \nabla_x f(x) |_{x_0}  =  \left[  \frac{\partial f_i(x_0)}{\partial x_j}\right]_{i = 1,\cdots,n, j=1,\cdots m}.
\end{equation}
%
%
%
\CH{We now show that the minimization in Eq.~\eqref{minimization-d2} can be solved under the first order approximation in Eq. \eqref{eq.foapprox}.
In particular, Step 1} corresponds to solving the following problem:
%
\begin{align}
\delta^{*}(d) = \arg \min_{\delta: f(x + \delta) - f(x_0) = d} ||\delta||_2,
\label{eq.optdelta}
\end{align}
that, under the linear approximation, becomes\footnote{To keep the notation light, here and in the sequel,  we omit the subscript of  the Jacobian matrix.}
\begin{align}
\label{min_ls}
\delta^{*}(d) = \arg \min_{\delta: d = J \delta} \|\delta\|_2,
\end{align}
that corresponds to finding the  minimum norm solution of the system of linear equations $d = J \delta$.
The solution of the above problem is given by the following proposition.
\begin{proposition}
Since $n < m$, under the assumption that $J$ has full rank\footnote{We verified this assumption experimentally as discussed in
Section  \ref{sec.attackParam}.}, the solution of the minimization in Eq. \eqref{min_ls} is:
%
%
\begin{align} \label{opt.attack}
{\delta}^{*}(d) = J^T(JJ^T)^{-1} d.
\end{align}
\end{proposition}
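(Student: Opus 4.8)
The plan is to solve the equality-constrained minimization in Eq.~\eqref{min_ls} via the standard orthogonal-decomposition argument for minimum-norm solutions of underdetermined linear systems. First I would unpack the hypothesis: since $J$ is $n\times m$ with $n<m$, ``full rank'' means $\mathrm{rank}(J)=n$, i.e.\ $J$ has full row rank. Consequently the $n\times n$ Gram matrix $JJ^T$ is symmetric and positive definite (for any $u\neq 0$, $u^T JJ^T u=\|J^T u\|_2^2>0$ because $J^T$ is injective), hence invertible; this already guarantees that $J^T(JJ^T)^{-1}d$ is well defined. Feasibility is then immediate by substitution: $J\bigl(J^T(JJ^T)^{-1}d\bigr)=JJ^T(JJ^T)^{-1}d=d$, so $\delta^{*}(d)$ satisfies the linear constraint.

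For optimality I would invoke the orthogonal decomposition $\mathbb{R}^m=\mathrm{range}(J^T)\oplus\ker(J)$ (the fundamental theorem of linear algebra, $\mathrm{range}(J^T)=(\ker J)^{\perp}$). Note $\delta^{*}(d)\in\mathrm{range}(J^T)$ by construction. Given any other feasible $\delta$, set $v=\delta-\delta^{*}(d)$; since both $\delta$ and $\delta^{*}(d)$ map to $d$ under $J$, we have $Jv=0$, i.e.\ $v\in\ker(J)$, hence $v\perp\delta^{*}(d)$. Pythagoras then gives $\|\delta\|_2^2=\|\delta^{*}(d)\|_2^2+\|v\|_2^2\ge\|\delta^{*}(d)\|_2^2$, with equality iff $v=0$. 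Therefore $\delta^{*}(d)=J^T(JJ^T)^{-1}d$ is the unique minimizer, which is exactly Eq.~\eqref{opt.attack}.

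An equivalent route, which I would at least mention, is through Lagrange multipliers: the problem minimizes the (strictly convex) objective $\|\delta\|_2^2$ over the affine set $\{\delta:J\delta=d\}$, so stationarity of the Lagrangian forces $\delta=J^T\lambda$ for some $\lambda\in\mathbb{R}^n$; imposing $J\delta=d$ yields $JJ^T\lambda=d$, hence $\lambda=(JJ^T)^{-1}d$ and $\delta=J^T(JJ^T)^{-1}d$, with convexity ensuring this stationary point is the global optimum.

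I do not anticipate a real obstacle, as this is a classical pseudoinverse fact; the only point that genuinely needs care is the observation that full row rank of $J$ (which holds because $n<m$) is precisely what makes $JJ^T$ invertible, and this single fact underpins both the well-posedness of the closed-form expression and the orthogonality argument used to certify minimality.
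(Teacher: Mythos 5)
Your proof is correct, and it takes a genuinely different route from the paper's. The paper recasts the minimum-norm problem as the unconstrained least-squares problem $\min_\delta \|J\delta-d\|_2^2$, asserts that $J^TJ$ is positive definite so the objective is strictly convex, and then cites a textbook for the closed form $J^T(JJ^T)^{-1}d$, finally checking that this point attains $\|J\delta^*-d\|_2^2=0$ and so is also the minimum-norm solution. You instead argue directly on the constrained problem: full row rank of $J$ makes $JJ^T$ positive definite and invertible, $\delta^*(d)=J^T(JJ^T)^{-1}d$ is feasible by substitution, and any other feasible point differs from it by an element of $\ker(J)$, which is orthogonal to $\mathrm{range}(J^T)\ni\delta^*(d)$, so Pythagoras yields minimality and uniqueness. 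Your version is self-contained and in fact more rigorous on one point: since $n<m$, the $m\times m$ matrix $J^TJ$ has rank $n<m$ and is only positive \emph{semi}definite, so the paper's strict-convexity claim is literally false, and minimizing $\|J\delta-d\|_2^2$ alone does not single out a unique solution --- the underdetermined system $J\delta=d$ has an entire affine subspace of exact solutions. The paper's conclusion is still the right one (the cited formula is the minimum-norm solution among these), but the justification leans on the reference rather than on the stated convexity argument; your orthogonal-decomposition (or, equivalently, Lagrange-multiplier) derivation closes that gap cleanly.
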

\begin{proof}
Eq. \eqref{min_ls} can be rewritten as the solution of the following non-negative least-squares  problem:
\begin{align}
\min_{\delta} \| J \delta - d||_2^2 = \min_{\delta} \left\{ \delta^T (J^T J )\delta - 2\delta^T J^T d + d^T d\right\}.
\end{align}
In fact, since  $(J^T J)$ is positive definite, the to-be-minimized function in the right-hand side of the equation is (strictly) convex and thus admits a minimum.
If $J$ has full rank, the solution of the above problem is unique and is given by \cite{boyd2004convex}:
\begin{equation}
\label{eq.solution}
\delta^* =  J^T (J J^T)^{-1} d,
\end{equation}
which satisfies  $J \delta^* = d$ (and hence $\| J \delta^* - d||_2^2 = 0$), thus also corresponding to the minimum norm solution in \eqref{min_ls}.
\end{proof}
From Proposition 1, we can express the norm of the optimum perturbation as follows
\begin{align} \label{expression-d}
||\delta^*(d)||^2 = &  \delta^{* T}  \delta^* =  (J^T {(J J^T)}^{-1} d)^T(J^T {(J J^T)}^{-1} d) \nonumber\\ = & d^T(J^T {(J J^T)}^{-1})^T (J^T {(J J^T)}^{-1} d) \nonumber\\ = & d^T {{(J J^T)}^{-1}}  d,
\end{align}
where in the last equality we have exploited the symmetry of $(J J^T)^{-1}$ (being the inverse of a symmetric matrix $(J J^T)$), and then ${((J J^T)^{-1})}^T = (J J^T)^{-1}$.

The optimal target point $r^*$, and hence the optimal distance term $d^*$, is then determined \CH{(Step 2)}  by solving the following minimization:
\begin{align}
\min_{d: d^T(c_i - c_t) \le g_{ti}, \forall i \ne t} &\hspace{0.2cm} d^T {{(J J^T)}^{-1}}  d.
\label{problem-v2}
\end{align}
%

Note that the above formulation corresponds to minimizing the Mahalanobis distance induced by the Jacobian matrix, between the current output of the network and the target point.
The above formulation has a very intuitive meaning, which is illustrated in Fig. \ref{fig.sketch}. The term $d^T {{(J J^T)}^{-1}}  d$  allows to take into account the effort required in the pixel domain to move along a given direction in the output domain.

\begin{figure}
\centering\includegraphics[width=0.9\columnwidth]{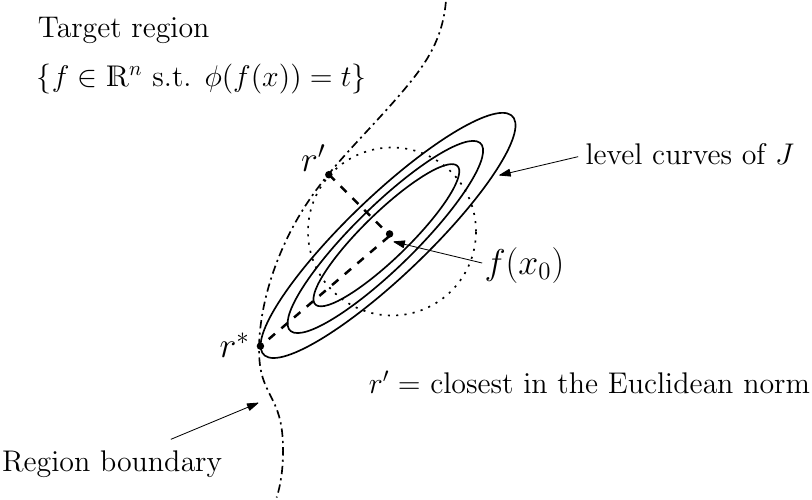}
\caption{Illustration of the intuition behind the formulation in \eqref{problem-v2}. 
	\CH{The point in the target region closest to $f(x_0)$ in the Euclidean norm  is $r'$. Then,  $r' = f(x_0) + d'$, where  $d'$ is the minimum norm displacement necessary to reach the target region. 
	  Instead, the point that can be reached with the minimum distortion in the input space, i.e., the point which minimizes $\delta$, is $r^{*}$ (this point corresponds to a displacement $d^*$ having larger norm than $d'$).  $r^*$ is the optimal target point that the attack wants to reach. }
	}
\label{fig.sketch}
\end{figure}
To solve \eqref{problem-v2}, we find convenient to rewrite it as follows:
\begin{align}
\min_{d:  A d  \le b} \frac{1}{2} d^T {{(J J^T)}^{-1}}  d,
\label{problem-v2_2}
\end{align}
where $A = [(c_1 - c_t)^T \dots  (c_l - c_t)^T]$ has size $(l-1) \times n$, and $b$ is an \CH{$(l-1)$}-long vector defined as $b_i = g_{ti}$.
We observe that when $l > n + 1$ the system is overdetermined.


\begin{theorem}\label{prop1}
Problem \eqref{problem-v2_2} (and hence \eqref{problem-v2}) has a unique solution given by:
\begin{equation}
\label{d_opt-v2}
d^* = - (J J^T) (A^T \lambda^*),
\end{equation}
where
\begin{equation}
\label{lambda_constraint}
\lambda^* = \arg \min_{\lambda} \hspace{0.2cm} \frac{1}{2} {\lambda}^T A (J J^T) A^T \lambda  + b^T \lambda, \quad \lambda \ge 0,
\end{equation}
%
and then
\begin{equation}
\label{sol-v3}
\delta^* = J^T (J J^T)^{-1} d^* = - J^T (A^T \lambda^*).
\end{equation}
\end{theorem}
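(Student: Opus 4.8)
The plan is to read \eqref{problem-v2_2} as a convex quadratic program with a \emph{strictly} convex objective and linear inequality constraints, and to dispose of it by Lagrangian (Wolfe) duality. First I would record the structural facts: since $n<m$ and $J$ has full (row) rank, the Gram matrix $JJ^T$ is symmetric positive definite, hence so is $Q:=(JJ^T)^{-1}$; therefore $d\mapsto \tfrac12 d^T Q d$ is strictly convex and coercive. Assuming the target region $\{d: A d\le b\}$ is nonempty (this is exactly the hypothesis that the target class is reachable from $f(x_0)$), the Frank--Wolfe theorem (a quadratic bounded below on a nonempty polyhedron attains its minimum) together with strict convexity gives existence and uniqueness of the minimizer $d^*$, establishing the first assertion of the theorem.

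Next I would introduce the Lagrangian $\mathcal L(d,\lambda)=\tfrac12 d^T Q d+\lambda^T(Ad-b)$ with $\lambda\ge 0$. Being a feasible convex QP, the problem satisfies strong duality (Wolfe's theorem), and any optimal pair obeys the KKT conditions. Stationarity in $d$ reads $Qd+A^T\lambda=0$, i.e. $d=-Q^{-1}A^T\lambda=-(JJ^T)(A^T\lambda)$, which is \eqref{d_opt-v2}. Substituting this $d(\lambda)$ back into $\mathcal L$ collapses the dual function to $-\tfrac12\lambda^T A(JJ^T)A^T\lambda-b^T\lambda$, using $Q^{-T}=Q^{-1}$ and $Q^{-1}QQ^{-1}=Q^{-1}$; hence maximizing the dual over $\lambda\ge 0$ is equivalent to the non-negative least-squares problem \eqref{lambda_constraint}, whose solution is the optimal multiplier $\lambda^*$. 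Finally, inserting $d^*$ from \eqref{d_opt-v2} into the Step-1 identity \eqref{eq.solution}, $\delta^*=J^T(JJ^T)^{-1}d^*$, and cancelling $(JJ^T)^{-1}(JJ^T)=I$, yields $\delta^*=-J^T(A^T\lambda^*)$, which is \eqref{sol-v3}.

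I expect the main delicacy to lie in the recovery of $d^*$ from $\lambda^*$ rather than in the duality itself: the dual Hessian $A(JJ^T)A^T$ is merely positive semidefinite whenever $A$ lacks full row rank, which is precisely the overdetermined regime $l>n+1$ highlighted before the theorem, so $\lambda^*$ need not be unique. What must be argued is that $d^*=-(JJ^T)A^T\lambda^*$ is nevertheless the same vector for every dual optimum $\lambda^*$; this follows because $d^*$ is pinned down by the (unique) primal minimizer and the stationarity equation, so any two optimal multipliers differ only in $\ker(A^T)$ after multiplication by $JJ^T$. A secondary, purely bookkeeping point is that the box constraint $x_0+\delta\in[0,1]^m$ was dropped in passing from \eqref{minimization-d2} to \eqref{problem-v2}; its effect is discussed separately in Section \ref{sec.discussion} and does not enter the present argument.
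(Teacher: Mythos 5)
Your proof is correct and follows essentially the same route as the paper: form the Wolfe dual of the convex QP, use the stationarity condition $(JJ^T)^{-1}d + A^T\lambda = 0$ to express $d$ in terms of $\lambda$, and substitute back to reduce the dual to the NNLS problem \eqref{lambda_constraint}, then recover $\delta^*$ via \eqref{eq.solution}. Your added observations — existence and uniqueness of $d^*$ from strict convexity/coercivity of $d\mapsto \frac{1}{2}d^T(JJ^T)^{-1}d$ on a nonempty polyhedron, and the well-definedness of $d^*=-(JJ^T)A^T\lambda^*$ even when the dual Hessian $A(JJ^T)A^T$ is singular and $\lambda^*$ is non-unique (since any two optimal multipliers differ by an element of $\ker(A^T)$) — are refinements the paper leaves implicit rather than a different argument.
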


\begin{proof}
%
 %
The optimal solution of the quadratic problem in  \eqref{problem-v2_2}
can be obtained by solving the easier Wolfe dual problem \cite{bertsekas1997nonlinear} defined as:
\begin{align}
\max_{d, \lambda} \hspace{0.2cm} &\frac{1}{2} d^T {{(J J^T)}^{-1}}  d + \lambda^T (A d - b)\nonumber\\
\textrm{s.t.} \hspace{0.2cm}  &{{(J J^T)}^{-1}}  d + A^T \lambda = 0\nonumber\\
&\lambda \ge 0,
\end{align}
where $\lambda$ is a column vector with $l-1$ entries.
By rewriting the objective function as $- \frac{1}{2} d^T {{(J J^T)}^{-1}} d +  d^T ({{(J J^T)}^{-1}} d  + A^T \lambda) - b^T \lambda$ and after some algebra, the optimization problem can be rephrased as:
%
\begin{align}
\min_{d, \lambda} \hspace{0.2cm} & \frac{1}{2} d^T {{(J J^T)}^{-1}}  d + b^T\lambda\nonumber\\
\textrm{s.t.} \hspace{0.2cm} & {{(J J^T)}^{-1}}  d + A^T \lambda = 0\nonumber\\
& \lambda \ge 0.
\end{align}
Solving the equality constraint yields $d^*$ in \eqref{d_opt-v2} as a function of $\lambda$. By substituting $d^*$ in the objective  function and exploiting the symmetry of $(J J^T)$ we obtain \eqref{lambda_constraint}.
\end{proof}

The problem in \eqref{lambda_constraint} is a NNLS problem \cite{bertsekas1997nonlinear,boyd2004convex}, for which several numerical solvers exist \cite{fletcher2013practical}, many of them belonging to the class of active set methods, see for instance \cite{lawson1995solving,MYRE2017755}. In particular, an easy-to-implement algorithm, whose complexity grows linearly with the number of label vectors, named Sequential Coordinate-Wise algorithm, has been proposed in \cite{NNLS}. \BT{Theorem 1 identifies the minimum perturbation necessary to enter the target region $\delta^*(d^*)$ (simply indicated as $\delta^*$ in the following). Then the adversarial image is computed as $x_{adv} = x + \delta^*$.}

Note that in the multi-label case, the number of label vectors grows exponentially with $n$ (since $l = 2^n$). However, in this case, the problem can be significantly simplified, as discussed in Section \ref{multi-label-simplification}.

\subsection{Iterative formulation of JMA}
\label{sec.discussion}

In principle, JMA should produce an adversarial example in one-shot. In practice, however, this is not always the case, due to the fact that the linear assumption of $f$ holds only in a small neighborhood of the input $x$, hence the assumption may not be met when the distortion necessary to attack the image is larger. In order to mitigate this problem, we consider the following iterative version of JMA:
\begin{itemize}
  \item When  $\delta^*$  does not bring $x$ into the target region, that is, $\phi(x + \delta^*) \neq t$, we update the input in the direction given by the perturbation considering a small step $\epsilon$, thus remaining in the close vicinity of $x$.
  \item When the target class is reached, a binary search is performed between the perturbed sample $ x+\delta^* $ and the original input $ x$, and the adversarial example resulting in the smallest perturbation is considered.
\end{itemize}

Finally, in practice, after that the optimal perturbation  $\delta^*$  is superimposed to $x$,
a clipping operation is performed to be sure that the solution remains in the  $[0,1]^m$ range.

The iterative version of JMA resulting from the application of the above steps is described in Algorithm 1. In the algorithm, the maximum number of iterations (updates) is set to $n_{it,max}$.


\begin{algorithm}[h!]
	\caption{Jacobian-induced Mahalanobis-distance attack}
	\label{ALG:attack algorithm}
	\begin{algorithmic}[1]
		\REQUIRE ~~\\ 
		max no. of iterations  $n_{it,max}$, to be attacked image $x$, target class $t$ (target label vector $ c_t $), updating step size $ \epsilon $
		\ENSURE ~~\\ 
		adversarial example $ x ^{adv}$
		\FOR{ $i \in [1, n_{it,max}]$}
		\STATE calculate $f(x)$
		\STATE calculate the Jacobian matrix by backward propagation: $ J= \nabla_x f(x)$
		\STATE calculate matrix $ A $ and vector $ b $ 
\STATE calculate $\lambda^*$ in Eq.\eqref{lambda_constraint} via the sequential Coordinate-Wise algorithm in \cite{NNLS}
		\STATE calculate adversarial perturbation: 
$\delta^* = - J^T (A^T \lambda^*)$.
		
		\STATE  $ x = x + \delta^* $ 
\STATE  $ x = \text{clip}(x , 0, 1 )$ 
		\IF{$\phi(f(x)) = t$} 
        \WHILE{$\phi(f(x)) = t$}
		      \STATE do binary search between $ x $ and $ x-\delta^* $
\STATE  $ x = \text{clip}(x , 0, 1 )$ 
        \ENDWHILE
        \STATE return $x^{adv}$
		\ELSE
           \STATE $ x =   (x - \delta^* ) + \epsilon\frac{\delta^*}{||\delta^*||} $
		\ENDIF
		
		\ENDFOR
	\end{algorithmic}
\end{algorithm}

\subsection{Simplified formulation for the multi-label case}
\label{multi-label-simplification}


In most cases,  the number of classes $l$, determining the number of rows of matrix $A$ in Eq. \eqref{problem-v2_2}, is limited.
For multi-label classification, however,
all labels' combinations are possible,
and $l = 2^n$. Therefore,
the size of $A$ increases exponentially slowing down the  attack.
Luckily, in this case, the  minimization problem  can be significantly simplified 
by properly rephrasing  the constraint in  \eqref{expression-constraint}.
Specifically, since all the codeword combinations are possible,  the constraint forcing the solution to lye in the
desired region can be rewritten element-wise as follows:
\begin{equation}
 \label{expression-constraint-ML}
f_j  \cdot c_{tj}  \ge   0, \quad j = 1, 2 \dots n.
\end{equation}
By rewriting  \eqref{expression-constraint-ML} as a function of $d$, we get
\begin{equation}
 \label{expression-constraint-ML-2}
d_j \cdot c_{tj} \ge - f_j(x_0) \cdot c_{tj}, \quad j = 1, 2 \dots n.
\end{equation}
%
that can be rewritten in the form $Md \le e$, where  $e$ is a $n$-length vector with elements $e_j =  f(x_0)_j \cdot c_{tj}$ and $M$ is an $n \times n$ diagonal matrix with diagonal elements $M_{jj}  = - c_{tj}$.
Then, the minimization problem has the same form of \eqref{problem-v2_2}, with $M$ and $e$ replacing $A$ and $b$, and
Proposition \ref{prop1} remains valid, with the difference that the scalar vector $\lambda$ has now dimension $n$. Therefore, the NNLS problem can be solved by means of the sequential Coordinate-Wise algorithm in \cite{NNLS} with complexity $O(n)$ instead of $O(2^n)$ as in the original formulation.

\section{Experimental Setting}
\label{sec.methods}

To evaluate the performance of JMA, we have
run several experiments addressing different classification scenarios, including ECOC-based classification \cite{verma2019ecoc} and  multi-label classification. Although less significant for this study, we have also run some experiments in the one-hot encoding scenario.
In all the cases, JMA is applied at the logits level.

We implemented JMA
by using Python 3.6.9
via the Keras 2.3.1 API. We run the experiments by using an NVIDIA GeForce RTX 2080 Ti GPU. The code,
as well as the trained models
and  the information for the reproducibility of the experiments are publicly available  at \href{https://github.com/guowei-cn/JMA--A-General-Close-to-Optimal-Targeted-Adversarial-Attack-with-Improved-Efficiency.git}{https://github.com/guowei-cn/JMA--A-General-Close-to-Optimal-Targeted-Adversarial-Attack-with-Improved-Efficiency.git}.
In the following sections, we describe  the classification tasks considered in the experiments, present the evaluation and  comparison methodologies, and detail the experimental setting.
%


\subsection{Classification networks and settings}

\subsubsection{One-hot encoding} 

 %
%
%
To assess the performance of JMA in the single-label classification scenario,
we considered the task of traffic sign classification on the German Traffic Sign Recognition Benchmark (GTSRB) dataset \cite{stallkamp2012GTSRB}, with a VGG16 architecture. In this scenario, 
the accuracy of the the trained model on clean images is 0.995.
\CH{Moreover, to validate the effectiveness of the JMA algorithm in a challenging classification scenario with a large number of classes, we defined a new classification task by selecting 2,000 classes from the most populated categories in the ImageNet21K dataset \cite{ImageNet21K} (referred to as ImageNet2K in the following). A  ViT-B/16 network pretrained on ImageNet21K and fine-tuned on a training subset of ImageNet2K was used for this task.
The accuracy on clean images is 0.66.
}
%
%
%

%

\subsubsection{ECOC-based classifier}

We considered the ECOC framework for three image classification tasks,
namely MNIST \cite{lecun1998mnist}, CIFAR-10 \cite{krizhevsky2009cifar10}, and
GTSRB \cite{stallkamp2012GTSRB}.
We implemented the ECOC scheme
considering an
ensemble of networks, each one outputting
one bit\footnote{According to \cite{verma2019ecoc}, resorting to an ensemble of  networks  permits to achieve better
robustness against attacks.}. The number of branches $h$ of the ensemble is  $ h= 10 $ for MNIST and CIFAR-10, and $h=16 $ for GTSRB.
We used Hadamard codewords as suggested in \cite{verma2019ecoc}.
We considered an Hadamard code with $n = 16$ for MNIST and CIFAR-10. For the GTSRB case, we set $l = 32$, by selecting the classes with more examples, and used an Hadamart code with $n = 32$.
%
Following the original ECOC design, we considered the VGG16 architecture \cite{tfweights}
as baseline, with the
first 6 convolutional layers shared by all the networks of the ensemble (shared bottom),
and the remaining 10 layers (the last 8 convolutional layers
and the 2 fully connected layers) trained separately for
each ensemble branch. The weights of the ImageNet pre-trained model are used for the shared bottom.
%

\subsubsection{Multi-label classification}

%

We primarily tested the performance of JMA in a multi-label setting by using the VOC2012 dataset  \cite{VOC2012}, which is a benchmark dataset adopted in several multi-class classification works \cite{wang2016cnn,Devil}.  This dataset has been used to train models to recognize object inside scenes. Specifically, the 11,530 images of the dataset
contain objects from 20 classes.
%
Then, $n = 20$ and the number of possible label vectors is $2^n \approx 10^6$.
The dataset is split into training, validation, and testing subsets, with proportion 6:2:2.
We considered the same model architecture adopted in \cite{song2018multi}
that is, a standard InceptionV3 \cite{szegedy2016rethinking}.
The categorical hinge loss was used to train the model, with a standard Adam optimizer with $lr=10^{-4}$ and batch size 64.
The performance of the network in this case are measured in terms of mean average precision (mAP), that is, the mean of the average precision (AP)  for all the classes, where AP is a measure of the area under the precision-recall curve.
%
According to our experiments, the mAP of our model is 0.93, which is aligned with  \cite{song2018multi}.

\CH{To better show the generality of the proposed attack, we also trained additional models using different and more modern architectures, namely, ResNet50 \cite{he2016deep}, ViT-B/16 \cite{dosovitskiy2020image}, and CLIP+MLP \cite{radford2021learning} , and ran attacks against them 
The results regarding these networks are reported in Section \ref{sec.furtherExp}.}
\CH{
	We further validated the effectiveness of JMA on multi-label classification by running some additional experiments
	on more complex datasets, namely MS-COCO  \cite{lin2014microsoft}  and NUS-WIDE  \cite{NUS-WIDE} (see again Section \ref{sec.furtherExp})}.

\subsection{Comparison with the state-of-the-art}

We have compared the performance of JMA with the most relevant state-of-the-art attacks
for the various classification scenarios.
In all the cases, we used the code made available by the authors in public repositories.

For the multi-label case,  we considered  ML-C\&W \cite{song2018multi},
 MLA-LP \cite{zhou2020generating} , LOTS \cite{rozsa2017LOTS}  \CH{and the method in  \cite{Mahmood_2024_CVPR}, referred to as SemA-ML in the following.}
We did not consider ML-DF, since its performance are always inferior to those of ML-C\&W \cite{song2018multi,zhou2020generating})

Regarding LOTS, as described in Section \ref{sec.adv-attack}, it can be applied to any internal layer of the network.
In the experiments, for a fair comparison, we have applied LOTS at the logits layer.
By following \cite{rozsa2017LOTS}, for every target label vector,  we randomly selected 20 images
with that label vector, and averaged the logits  to get the target output vector (for target label vectors for which the total number of images available is lower than 20, we used all of them).
We stress that, in the way it works, LOTS can not target an arbitrary labels' combination, since examples for that combination may not be available.

For the case of ECOC-based classification, the comparison is carried out against C\&W,
which is also the attack considered in \cite{verma2019ecoc},
and   \cite{zhang2020challenging}  (ECOC Attack) specialized to work with the ECOC model  \cite{zhang2020challenging}.
Although suboptimal in this case, we also considered ML-C\&W and MLA-LP, by applying the latter as discussed in Section \ref{sec.ECOC}. However, given the complexity of the attack in the ECOC scenario and the high sub-optimality of MLA-LP, the performance of  this attack are extremely poor, and no image can be attacked by using it.
%
Finally, the performance of  LOTS are also assessed. In this case, the  target output vectors are computed by averaging 50 images (when the number of  available images is lower than 50, we averaged all of them).
%

For the one-hot encoding scenario, we considered the original C\&W algorithm  \cite{carlini2017towards}, which  is  one of the best performing  white-box attack working at the logits level.

We did not consider attacks like I-FGSM, PGD, and also AutoPGD and AutoAttack, 
as they cannot be
%
applied to frameworks different than single-label classification (see discussion in Section \ref{sec.relatedwork}),
e.g. to ECOC and in particular to the multi-label case, for which they would need a suitable extension via the definition of a proper loss function.


\subsection{Evaluation  methodology}

To test the  attacks in the various scenarios, we proceeded as follows:  we randomly picked 200 images from the test set, among those that are correctly classified by the network model \CH{(in the complex multi-label classification tasks of MS-COCO and NUS-WIDE, 
some label errors always occur, hence the attacked samples are chosen at random)};
then, for each image, we randomly picked a target label vector, different from the true label vector.

For multi-label classification,
in principle, there can be  - and in fact there are - combinations of labels that are not represented in the training set.
it is also possible that some of these combinations  do not correspond to valid  label vectors.
Since LOTS requires the availability of samples belonging to the target label vector, for ease of comparison, in the experiments, we considered label vectors that appear in the training set as target label vectors.
%
For the other methods, some tests have also been carried out  in the more challenging case where
the target  label vectors  correspond to randomly chosen target vectors. Such target vectors are obtained by randomly changing a prescribed number of bits in the label vector of the to-be-attacked image.

An attack is considered successful only when the predicted label vector and the target label vector coincide. If only a subset of the labels can be modified by the attack, and  the target label vector is not reached, we mark this as a failure.
We measured the performance of the attacks  in terms of Attack Success Rate (ASR), namely, the percentage of generated adversarial examples that are assigned to the target class. To measure the quality of the adversarial image,
we considered the Mean Square Error (MSE) of the attack, evaluated as MSE $= ||\delta||_2/\sqrt{H\times W\times 3}$, where $H\times W \times 3$ is the size of the image (image values are  in the [0,1] range).

%
For the tests with  random choice of the target vector,  we also report the average percentage of labels/bits successfully modified by the attack, indicated as bASR.
%
%

The computational complexity is measured by providing the average number of iterations required by the attack, and the time  - in seconds - necessary to run it.
In the following, we let $n_{it}$ denote the number of iterations
necessary to run an attack, that is, 
the number of
attack updates. The average number of iterations is denoted by $\bar{n}_{it}$.

\subsection{Attack Parameters Setting}
 \label{sec.attackParam}
%
The tunable parameters of JMA are the maximum number of iterations (${n}_{it, max}$)
and the step size $\epsilon$.
The maximum number of iterations
takes also into account the number of `for' loops and the number of steps of the binary search in the final iteration (see Algorithm 1), the total number of updates of JMA, then, is ${n}_{it} = (v-1) + n_{bs}$, where $v$ is the number of `for' loops of the algorithm ($v < {n}_{it, max}$, see Algorithm 1) and  $n_{bs}$ is the number of steps of the binary search.
$n_{bs}$ is set to  6 in our the experiments.

%

For C\&W, ML-C\&W and ECOC attack, the parameters are the initial  $\lambda$, the number of steps of the binary search $n_{bs}$,
and the maximum number of iterations $n_{it, max}$.
%
%
For all these attacks, a binary search  is carried out over the  loss tradeoff parameter  $\lambda$   (see Section \ref{sec.basic-attack}).
At each binary search step, iterations are run (until $n_{it,max}$) to obtain the adversarial example. The number of iterations that we report considers only  the number of attack updates corresponding to the final $\lambda$ value used by the attack.
Hence, in these cases, the actual complexity of the attack is better reflected by the attack time.
As for the setting of these parameters, we consider  $n_{bs} = 5$ and $10$, and various values of the initial $\lambda$.
\CH{For SemA-ML, we set  $n_{it, max}=300$ following the official code,
with the other settings left as the default.}
For MLA-LP  and LOTS, the only parameter is
the maximum number of iterations $n_{it,max}$ allowed in the gradient
descent.
In the experiments,
we considered various values of $n_{it, max}$.
%
%
%
For  MLA-LP, we set  $n_{it, max}  = 100$ for the multi-label classification case, and  raised it up to $3000$ in the ECOC case when the attack is more difficult.
For LOTS, we set $n_{it, max}  = 2000$, namely the  maximum number considered in \cite{rozsa2017LOTS},  maximizing the chances to find an adversarial example.
Given that the time complexity of LOTS is limited compared to the other methods, in fact,
the time complexity of the attack remains small also for large values of $n_{it, max}$.
%
%
We have verified experimentally that LOTS converges within
1000 iterations 92\% of the times\footnote{Convergence is determined by checking whether the new loss value is close
enough to the average loss value of the last 10 iterations.},
validating the soundness of our choice.
For ML-C\&W, we set  $n_{it,max}  = 1000$ as default value,
and in most of the cases results are reported for $(\lambda, n_{bs}, n_{it, max}) = (0.01, 10, 1000)$, that resulted in the highest ASR.
Finally, for JMA, we generally got no benefit by increasing the number of iterations beyond 200, since most of the time JMA converges in by far less iterations, and 200 is more than enough to get and adversarial image in all the cases.
%

To simplify the comparison, the confidence parameter is set to 0 for all the attacks.
%
%
%
%
%
To avoid that the an attack gets stuck with some images (e.g.  in the multi-label case when many labels have to be changed), we set a limit on the running time of the attack, and  considered the attack unsuccessful when an adversarial example could not be generated in less than 1 hour.

Regarding JMA, we found experimentally  that, the assumption made in the theoretical analysis that the Jacobian matrix has full rank always holds when the number of iterations of the attack remains small. However, when the number of iterations increases
 it occasionally  happens that the rank of the Jacobian matrix is not full. In this case, an adversarial perturbation can not be found, resulting in a failure of the attack.

\section{Results} 
\label{sec.results}

The results we obtained in the various settings are reported in this section.
In all the tables,  $\bar{n}_{it}$  refers to the average number
of iterations/updating of the attack, averaged on the successful attacks.
The attack parameters reported are
$(\lambda, n_{bs}, n_{it,max})$ for C\&W, ML-C\&W and ECOC attack,     $(\epsilon,  n_{it,max})$  for JMA and $n_{it,max}$ for LOTS and MLA-LP.

\subsection{One-hot encoding                                                                                                                                                                                                                                                                                                                                                                                                                                                                                                                                                                                                                                                                                                                                                                                                                                                                                                                                                                                                                                                                                                                                                                                                                                                                                                                                                                                                                                                                                                                                                                                                                                                                                                                                                                                                                                                                                                                                                                                                                                                                                                                                                                                                                                                                                                                                                                                                                                                                                                                                                                                                                                                                                                                                                                                                                                                                                                                                                                                                                                                                        }

The main advantage of JMA with respect to state of the art is obtained for classifiers that do not adopt one-hot encoding.
In such a scenario, in fact, focusing only on the target logit or the final score, as done by the most common adversarial attacks,  is approximately optimal.
Nevertheless, we verified that JMA is also effective in the one-hot encoding scenario.

Table \ref{TAB:one-hot} reports the results on GTSRB.
We see that JMA  achieves a larger ASR compared to  C\&W, yet with a larger MSE.
This is due to the fact that JMA tends to modify all the output scores, while  C\&W directly focuses on raising the target output, which is an effective strategy in the one-hot encoding case.
In addition, JMA can obtain an adversarial image with much lower iterations, requiring only very few seconds (2.45 sec in the best setting) to attack an image.

\CH{The results of JMA against ImageNet2K are reported in Table \ref{TAB:one-hot_imagenet2k}.}
\CH{We observe that, although JMA is not specifically designed for this scenario, it can successfully attack all the images in very few iterations, with an MSE similar to that obtained on GTSRB. The attack time is larger, due to the higher complexity of the NNLS problem.}

\begin{table}[]
	\centering
	\caption{Results against one-hot encoding classification for  GTSRB.}
	\label{TAB:one-hot}
%
	\begin{tabular}{|l|c|c|c|c|c|}
		\hline
		& Parameters     & ASR   & MSE  & $\bar{n}_{it}$ & Time(s)\\
\hline
\ChangeRT{0.5pt}
		\multirow{5}{*}{JMA}
		&(0.05,   200) & 0.81 & 1.6e-4 & 50.65 & 7.44  \\  \cline{2-6}
		&(0.1,   200)  & 0.88 & 2.1e-4 & 38.00  & 5.66 \\  \cline{2-6}
		&(0.2,   200)  & 0.94  & 2.6e-4 & 29.63 & 4.41 \\  \cline{2-6}
		&(0.6,200)    & 0.98 & 4.3e-4 & 16.20 & 2.46 \\ \hline
\ChangeRT{0.5pt}
\multirow{5}{*}{C\&W} &(1e-4,5, 100)  & 0.19 & 2.2e-5  & 85.95 & 45.06 \\  \cline{2-6}
&(1e-4,5,200)   & 0.33 & 3.0e-5 & 171.88 & 90.11  \\  \cline{2-6}
&(1e-2,5, 500)  & 0.55  &  5.8e-5 & 443.18 & 233.45  \\  \cline{2-6}
&(1e-1,10, 2000) & 0.75  & 7.7e-5  & 1661.11 & 875.35  \\ \cline{2-6}
&(1e-1,10, 5000) & 0.82  & 7.9e-5  & 3577.5 & 1884.59  \\ \hline
	\end{tabular}
\end{table}

\begin{table}[]
	\centering
	\caption{\CH{Results of JMA against one-hot encoding classification with a large number of classes (ImageNet2K).}
    }
	\label{TAB:one-hot_imagenet2k}
    \CH{
	\begin{tabular}{|l|c|c|c|c|c|}
		\hline
		& Parameters     & ASR   & MSE  
        & $\bar{n}_{it}$ & Time(s) \\ \hline
            \ChangeRT{0.5pt}
		\multirow{5}{*}{JMA}
		&(0.1, 200)  & 0.99 & 3.8e-6 & 26.59 & 445.61  \\  \cline{2-6}
            &(0.2, 200)  & 1.00 & 6.2e-6 & 19.17 & 297.00 \\  \cline{2-6}
            &(0.6, 200)  & 1.00 & 2.1e-5 & 13.59 & 178.78 \\  \cline{2-6}
		&(1, 200)  & 1.00 & 4.2e-5 
        & 12.11 & 160.06 \\  \hline
	\end{tabular}
    }
\end{table}

\subsection{ECOC-based classification}

The results for ECOC-based classification on GTSRB, CIFAR-10, and MNIST are reported in Tables \ref{TAB:ECOC-GTSRB}-\ref{C4-TAB:ECOC-MNIST}.
%
%
In all the cases, JMA achieves a much higher ASR for a similar MSE, with a significantly lower computational complexity.
\CH{In the tables, NA stands for `Not Applicable',  when no image can be attacked (ASR = 0) and then the corresponding metric can not be evaluated.} 

Specifically, in the case of GTSRB (Table \ref{TAB:ECOC-GTSRB}), JMA achieves
ASR=0.98 with MSE = 1.5e-4, a total average number of iterations  $\bar{n}_{it} = 29.15$ and attack time of $9.41$ sec.
This represents a dramatic improvement with respect to the other methods. In the case of ECOC attack,
for instance, the best result is ASR=0.93,  reached for a similar distortion (MSE = 1.9e-4),  with $\bar{n}_{it}$ larger than 1000 ($\bar{n}_{it} = 1089.03$), which is two orders of magnitude larger than JMA. We also see that the C\&W method fails in this case, with the ASR being about 0.30 for comparable MSE values. Remind that the C\&W method is not designed to work in the ECOC case (the results are aligned with those reported in \cite{verma2019ecoc,zhang2020challenging}), and when applied to the ECOC classification scenario it loses part of its effectiveness (see discussion in Section \ref{sec.adv-attack}). ML-C\&W is more effective, however it can only achieve  ASR=0.78 with a significantly larger MSE (9.7e-4) and at the price of a larger  complexity.  A  very poor behavior of the attack  is observed for LOTS  and in particular MLA-LP, that is never able to attack the test images in 1 hour, notwithstanding the large value of $n_{it,max}$. Since this method implements a greedy approach, it is not surprising that it fails in the complex ECOC classification scenario (more in general, the performance of MLA-LP are poor whenever the attack aims at changing several bits/labels - see the results on multi-label classification).

In the case of
CIFAR-10 (Table \ref{C4-TAB:ECOC-CIFAR}), JMA achieves ASR = 1 with a pretty small distortion (MSE = 1.1e-4) and a very  low complexity.
C\&W  attack achieves a much lower ASR (ASR = 0.49)  for a slightly lower MSE, with higher complexity.
The ECOC attack can achieve ASR = 0.99 with MSE=1.9e-4
in the setting with $n_{it,max}= 2000$, in which case the computational complexity of the algorithm is  high ($\bar{n}_{it} = 1347.42$ for an average attack time of $557.86$ sec).
Regarding LOTS, it can achieve ASR = 1, yet with a larger MSE. The results with LOTS confirm the lower complexity of this method with respect to ECOC attack and  C\&W.
%

The effectiveness of JMA is also verified in the  MNIST case (see Table \ref{C4-TAB:ECOC-MNIST}).
%
In this case, JMA reaches ASR = 1 with MSE=2.3e-3, while the ASR of  ECOC attack (in the best possible setting) and LOTS is respectively 0.73 and 0.78 for a similar MSE. The computational time is higher for LOTS and much higher for the ECOC attack. Both C\&W and ML-C\&W  have poor performance with an ASR lower than 0.52 for a MSE similar to that obtained by JMA.


Fig. \ref{fig:ECOC} reports some examples of images (successfully) attacked  with the various methods for the three tasks.

\begin{figure}
  \centering
  \begin{tabular}{@{}c@{}}
    \includegraphics[width=1\linewidth]{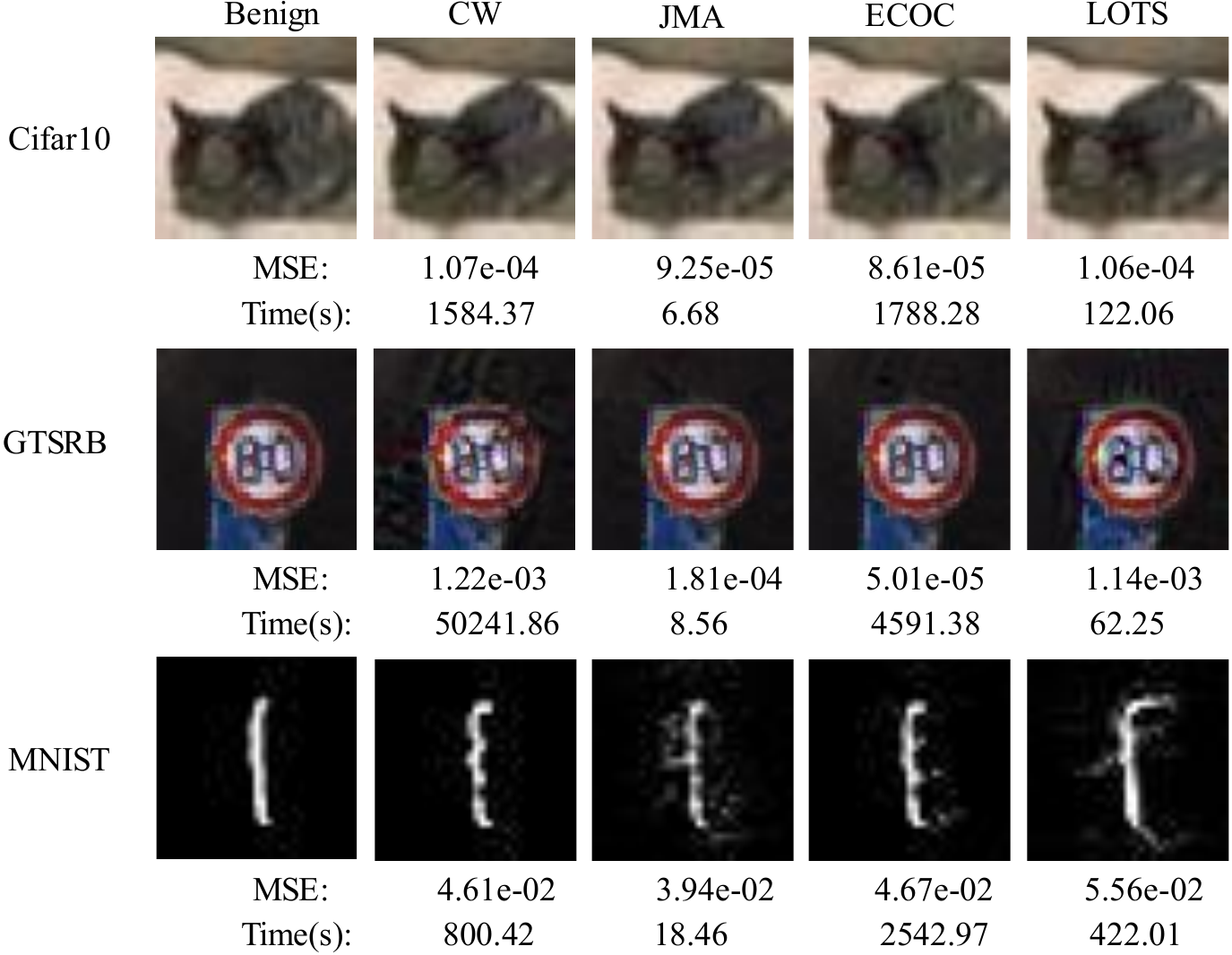}
  \end{tabular}

  \caption{Examples of attacked images in the ECOC classification case. The attacker's goal is to cause a misclassification from `cat' to `airplane' (CIFAR-10), from speed limitation `80kph' to `30kph' (GTSRB), and from digit `1' to `3' (MNIST). The distortion and time required for the attack are also provided for each case.}
  \label{fig:ECOC}
\end{figure}

\begin{table}
	\caption{Results against ECOC- classification on GTSRB. \CH{NA means `Not Applicable' (the metric can not be evaluated as no image can be attacked).}
}
    \label{TAB:ECOC-GTSRB}
    \centering
        \begin{tabular}{|l|c|c|c|c|c|}
            \hline
              & Parameters & ASR   & MSE & ${\scriptsize \bar{n}_{it}}$ & Time(s) \\ \hline
                          \multirow{3}{*}{JMA } 
            & (0.05, 200)            & 0.93  & 1.2e-4    & 58.46 & 18.14 \\ \cline{2-6}
            & (0.1, 200)           & 0.98  & 1.5e-4 & 29.15 & 9.41  \\ \cline{2-6} 
            & (0.2, 200)            & 0.99  & 2.0e-4  & 25.30 & 8.27 \\ \hline 
            \multirow{5}{*}{C\&W } & (1e-4, 5, 100)         & 0.08 & 1.2e-5   & 65.69  & 51.31 \\ \cline{2-6}
            & (1e-4, 5, 200)         & 0.11 & 1.6e-5  & 97.73  & 97.67 \\ \cline{2-6}
            & (1e-4, 5, 500)         & 0.12 & 1.8e-5  & 148.33 & 235.76 \\ \cline{2-6}
            & (1e-2, 5, 500)         & 0.21 & 6.1e-5 & 194.19 & 221.63 \\ \cline{2-6}
            & (1e-1, 10, 2000)       & 0.31 & 1.1e-4  & 782.29 & 1453.85 \\ \hline
            \multirow{5}{*}{\makecell{ECOC\\Attack}} & (1e-4, 5, 100)        & 0.35 & 4.5e-5 & 71.22 & 87.87 \\ \cline{2-6}
		& (1e-4, 5, 200)         & 0.40 & 3.8e-5  & 109.90 & 305.28 \\ \cline{2-6}
		& (1e-4, 5, 500)         & 0.43 & 3.6e-5  & 175.67 & 332.88 \\ \cline{2-6}
		& (1e-2, 5, 500)         & 0.61 & 8.6e-5  & 309.03 & 501.80 \\ \cline{2-6}
	    & (1e-1, 10, 2000)       & 0.93 & 1.9e-4  & 1089.03 & 2312.64 \\ \hline
                    ML-CW & (0.01, 10, 1000)   & 0.78 & 9.7e-4   & 111.02  & 2973.17 \\ \hline
            LOTS & $2000$   & 0.40 & 1.2e-4   & 395.33  & 97.52 \\ \hline
\ChangeRT{0.5pt}
            MLA-LP & $1000$   & 0.00 & NA  & NA  & ($>$ 1h) \\ \hline
        \end{tabular}
\end{table}

\begin{table}
	\caption{Results  against ECOC classification on  CIFAR10.}
	\label{C4-TAB:ECOC-CIFAR}
\centering
        \begin{tabular}{|l|c|c|c|c|c|}
            \hline
            & Parameters & ASR & MSE & $\bar{n}_{it}$ & Time(s) \\ \hline
            \ChangeRT{0.5pt}
            		\multirow{3}{*}{JMA } 
		& (0.05, 200)          & 1.00   & 1.1e-4    & 14.78 & 3.14 \\ \cline{2-6}
		& (0.1, 200)           & 1.00   & 1.3e-4 & 6.88  & 1.92 \\ \cline{2-6}
		& (0.5, 200)           & 1.00   & 1.5e-4 & 6.10  & 1.20 \\ \hline
\ChangeRT{0.5pt}
            \multirow{4}{*}{C\&W } & (1e-4, 5, 100)                 & 0.49  & 8.3e-5   & 40.76  & 23.20 \\ \cline{2-6}
            & (1e-4, 5, 500)                 & 0.60  & 7.2e-5 & 119.45 & 68.49 \\ \cline{2-6}
            & (1e-4, 10, 500)                & 0.96  & 1.7e-4  & 187.96 & 107.78\\ \cline{2-6}
            & (1e-1, 10, 2000)               & 0.99 & 1.6e-4  & 593.09 & 340.09 \\ \hline
\ChangeRT{0.5pt}
            \multirow{4}{*}{\makecell{ECOC\\Attack}}& (1e-4, 5, 100)                & 0.65 & 1.2e-4  & 83.08 & 50.36 \\ \cline{2-6}
            & (1e-4, 5, 500)                 & 0.92 & 1.4e-4 & 337.53 &  180.37 \\ \cline{2-6}
            & (1e-4, 10, 500)                & 0.95 & 1.8e-4   & 383.74 & 557.86 \\ \cline{2-6}
            & (1e-1, 10, 2000)               & 0.99 & 1.9e-4 & 1347.42 & 2173.31 \\ \hline
 \ChangeRT{0.5pt}
             ML-CW & (0.01, 10, 1000) & 0.565 &   1.0e-3 & 49.57  & 1409.02   \\ \hline
 \ChangeRT{0.5pt}
            LOTS & $2000$   & 1.00 & 2.6e-4  & 38.85   &  8.61 \\ \hline
 \ChangeRT{0.5pt}
            MLA-LP & $ 1000$   & 0.04  & 1.1e-4	 & 5 & 7.04 \\ \hline
        \end{tabular}
\end{table}

\begin{table}
	\caption{Results against ECOC classification on MNIST. 
}
	\label{C4-TAB:ECOC-MNIST}
	\centering
     %
            \begin{tabular}{|l|c|c|c|c|c|}
            \hline
            & Parameters & ASR   & MSE & $\bar{n}_{it}$ & Time(s) \\ \hline
            \ChangeRT{0.5pt}
                        \multirow{3}{*}{JMA } & (0.05, 200)  & 0.95 & 3.9e-3   & 48.61  & 196.55  \\ \cline{2-6}
            & (0.5, 200)   & 1.00 & 5.8e-3  & 12.04 & 45.53 \\ \cline{2-6}
            & (1, 200)     & 1.00 & 9.3e-3   & 8.69  & 33.58 \\ \hline
            \ChangeRT{0.5pt}
            \multirow{4}{*}{C\&W } & (1e-3, 10, 100)        & 0.01  & 2.2e-3    & 53.00 &  48.96 \\ \cline{2-6}
            & (1e-3, 10, 500)        & 0.37 & 8.4e-3 & 176.81 & 163.32 \\ \cline{2-6}
            & (1e-3, 10, 1000)       & 0.42 & 8.2e-3   & 328.05 & 327.64  \\ \cline{2-6}
            & (1e-1, 10, 2000)       & 0.52 & 8.7e-3  & 910.73 & 841.25  \\ \hline
            \ChangeRT{0.5 pt}
            \multirow{4}{*}{\makecell{ECOC\\Attack}}& (1e-3,10,100)        & 0.18 & 7.7e-3      & 92.28  & 131.46 \\ \cline{2-6}
		& (1e-3, 10, 500)        & 0.59 & 7.0e-3  & 488.73 & 697.33 \\ \cline{2-6}
		& (1e-3, 10, 1000)       & 0.69 & 6.5e-3  & 967.56 & 1387.41 \\ \cline{2-6}
		& (1e-1, 10, 2000)       & 0.73 & 6.0e-3  & 1921.29 & 2754.74 \\ \hline
\ChangeRT{0.5pt}
          ML-CW & (0.01, 10, 1000)   & 0.64 & 1.4e-2  & 297.53 & 1679.72 \\ \hline
\ChangeRT{0.5pt}
            LOTS & $2000$   & 0.78 & 4.3e-3  & 242.23  &  600.14 \\ \hline
\ChangeRT{0.5pt}
            \multirow{2}{*}{MLA-LP} & $1000$   & 0.00 & NA & NA &  
            \CH{NA} \\ \cline{2-6}
                         & $3000$   & 0.00 & NA & NA &  ($>$1h)  
                         \\ \hline
        \end{tabular}
\end{table}



\subsection{Multi-label classification (VOC2012 - InceptionV3)}

Table \ref{TAB:ML-Real}. shows the results in the case of multi-label classification when the target label vector is chosen randomly among those contained in the training dataset.
%
The average number of labels targeted by the attacks is 2.1\footnote{Most of the images in the VOC2012 dataset contains one or very few labeled objects, hence the label vectors have few 1's and choosing the target label from the training dataset results in few bit changes}.
%
%
It can be observed that both JMA and ML-C\&W attack achieve  ASR = 1. 
However, JMA has an advantage over ML-C\&W both in terms of MSE and, most of all, in terms of computational complexity, being approximately 64 times faster than ML-C\&W.
%
%
LOTS achieves much worse performance, with  ASR = 0.66 and a larger MSE. However, the average bASR is $0.979$, that confirms that the method has some effectiveness.
One possible interpretation for the not so good results of LOTS relies on the construction of the target vector performed by the  method.
In fact, compared to single-label 
classification, where images belonging to the same class have similar content (e.g. images showing the same traffic sign), in the multi-label classification case, images sharing the same label vector may be very different from each other.
For instance, an image of a crowded city street and an image with footballers on a soccer field are both instances of the people category, with the `people' label equal to 1.
As LOTS chooses the target output vector by averaging the logits of the target images, the resulting target vector might not be plausible when the logits of the target images are significantly different.
Finally, the performance of MLA-LP \CH{and SemA-ML}  are also poor, with a much lower ASR w.r.t. JMA and  ML-C\&W. Moreover, \CH{in the case of MLA-LP,} the complexity of the attack is very high\footnote{
These results are not in contrast with the results reported in \cite{zhou2020generating} for the same dataset, given that in \cite{zhou2020generating} the method is validated for target attacks that change only one bit of the original label vector, while the average number of bits changed in our experiments is larger than 2.}.

Tables \ref{TAB:ML-5bit} through \ref{TAB:ML-20bit} show the results in a challenging scenario where the target vector labels are not chosen from the training/test set
but are obtained by randomly changing a certain number of bits of the groud truth label vector.
Specifically,
Tables \ref{TAB:ML-5bit}, \ref{TAB:ML-10bit} and \ref{TAB:ML-20bit}
correspond to attacks inducing respectively 5,  10 and 20 bit errors (in the latter case all the bits should be changed).
%
%
%
%
%
The results are reported for JMA, ML-C\&W, MLA-LP \CH{and SemA-ML}. 
As stated previously, LOTS can not be applied in this case, since it requires the existence of a  number of samples corresponding to the target label vector.

Looking at the results in Table \ref{TAB:ML-5bit} (5 flipping bits), JMA can achieve an ASR = 0.95 (bASR = 0.99), which is 5\% higher than ML-CW, with an average attack time approximately 6 times faster, while MLA-LP \CH{and SemA-ML are} totally ineffective (ASR = 0).
%
%
%
As shown in Table \ref{TAB:ML-10bit},
 in the 10 flipping bit case, JMA can still achieve an ASR close to 0.90,
at the price of a longer time necessary to run the attack, with an average number of iterations $\bar{n}_{it}$ that goes above 240.
%
C\&W attack  achieves a  lower ASR 
when $n_{it,max}$ is raised to 5000, with a very large  computational cost. \CH{The MSE reported for  C\&W  is lower, however, this value is not computed on the same set of imagesand, arguably, a stronger distortion is necessary to attack the additional 16\% images attacked by JMA. 
	}
Fig. \ref{fig:mulitlabel} shows some examples of images successfully attacked by JMA in the 10-bit error case, when half of the labels are modified.

As shown in Table \ref{TAB:ML-20bit}, the ASR drops in the 20 bit case (when the attacker tries to modify all the 20 bits of the label vector).
Arguably, this represents a limit and extremely challenging scenario. It can be observed that, by increasing $n_{it,max}$ to 2000, JMA is successful 26\% of the times, at the price of a high computational cost,  while ML-C\&W is never successful.
Moreover, it is interesting to observe that JMA can achieve a bASR of 0.68, meaning that almost 70\% of the labels are modified by the attack, while the  bASR for ML-C\&W is below 0.1.
Not surprisingly, the MSE is large in this case.


%


\begin{figure*}
  \centering
  \begin{tabular}{@{}c@{}}
    \includegraphics[width=0.7\linewidth]{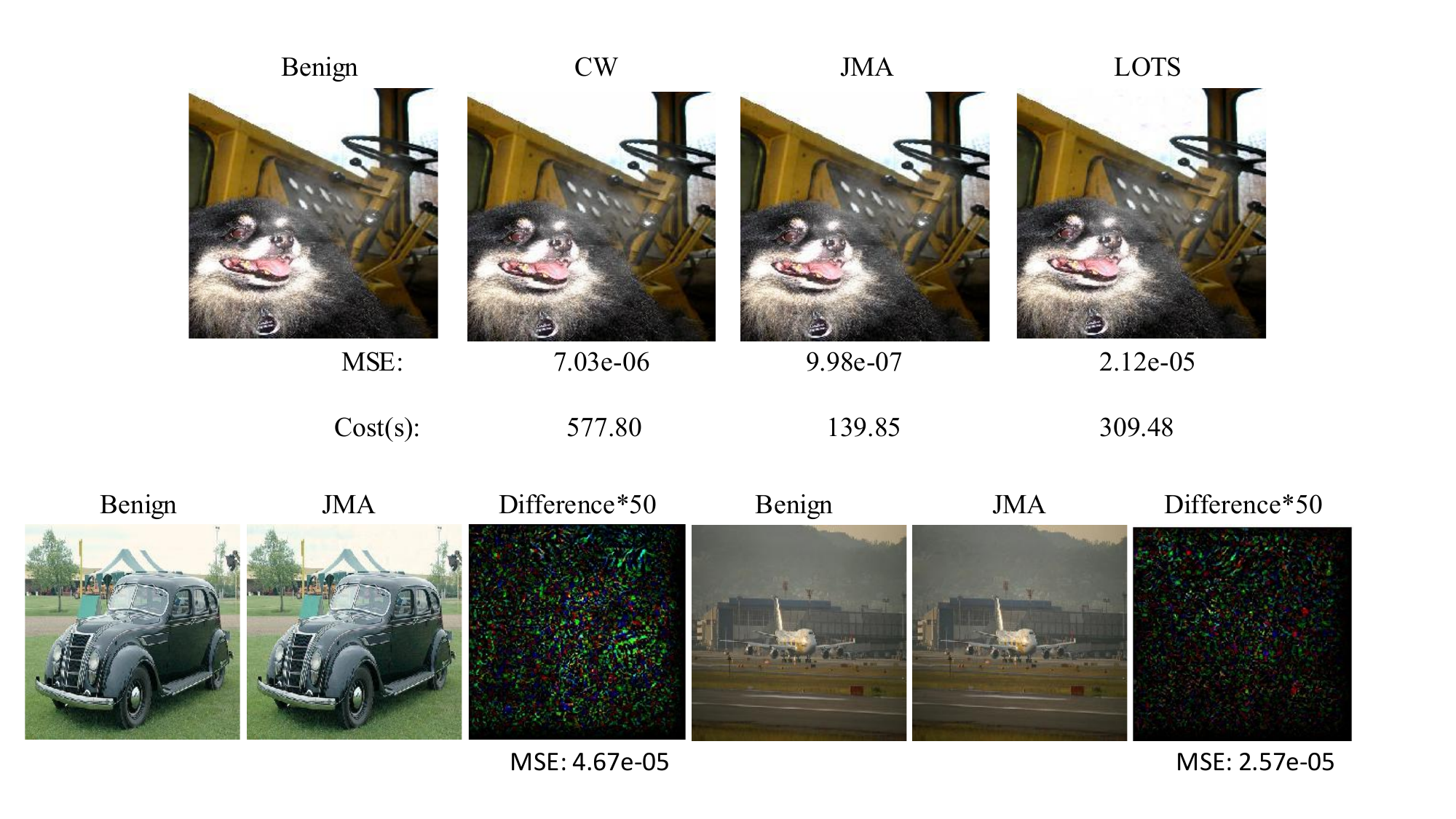}
  \end{tabular}

  \caption{Examples of JMA attacked images for which the attack can successfully flip 10 targeted bits of the label vector in the multi-label (VOC2012) classification task. For the car: the original image has label vector
  \footnotesize \texttt{(car=1, person=0, cat=0, cow=0, dog=0, sheep=0, bike=0, chair=0, potted-plant=0, monitor=0, aeroplane=0, bird=0, horse=0, boat=0, dining-table=0, train=0,...)} \normalsize
  while  the JMA attacked image is classified as \footnotesize \texttt{(car=0, person=1, cat=1, cow=1, dog=1, sheep=1, bike=1, chair=1, potted-plant=1, monitor=1, aeroplane=0, bird=0, horse=0, boat=0, dining-table=0, train=0, ...)}\normalsize. For the airplane: the original image has label vector
    \footnotesize \texttt{(car=0, person=0, cat=0, cow=0,
    dog=0, sheep=0, bike=0, chair=0, potted-plant=0, monitor=0, aeroplane=1, bird=0, horse=0, boat=0, dining-table=0, train=0,...)}\normalsize,
  while  the label vector of the JMA attacked image is      \footnotesize \texttt{(car=0, person=1, cat=1, cow=1, dog=0, sheep=0, bike=0, chair=0, potted-plant=1, monitor=1, aeroplane=0, bird=0, horse=1, boat=1, dining table=1, train=0,...)}\normalsize.
  }  \label{fig:mulitlabel}
  \vspace{-0.3cm}
\end{figure*}

\begin{table}[bh]
	\centering
	\caption{Results against multi-label VOC2012 classification \CH{(the target label vector is taken from the training set)}. On  average, the attacks are asked to flip 2.1 labels/bits.
	\protect\footnotemark
		}
	\label{TAB:ML-Real}
	\resizebox{\columnwidth}{!}{
      \renewcommand\arraystretch{1.2}
        \begin{tabular}{|l|c|c|c|c|c|c|}
		\hline 
		& Parameters & ASR & bASR &  MSE   & $\bar{n}_{it}$ & Time(s)  \\ \hline 
\ChangeRT{0.5pt}
		\multirow{4}{*}{JMA}
& (0.05, 200) & 1.00 & 1.0000   & 2.2e-6      & 29.94  & 25.97 \\ \cline{2-7}
& (0.1, 200) & 1.00 & 1.0000  & 2.6e-6    & 17.50  & 15.39 \\ \cline{2-7}
& (0.5, 200) & 1.00 & 1.0000  & 1.1e-5       & 7.51  & 7.32 \\   \cline{2-7}
& (1, 200) &  1.00 & 1.0000   & 3.3e-5      & 6.42  & 6.45 \\ \hline 
\ChangeRT{0.5pt}
		ML-C\&W & (0.01, 1000) & 1.00 &1.0000    & 6.3e-6 & 180.93 & 469.90 \\ \hline 
\ChangeRT{0.5pt}
		LOTS & 2000  & 0.66 & 0.9787  & 8.1e-6 & 437.20 & 268.16 \\ \hline
\ChangeRT{0.5pt}
		MLA-LP & 100 & 0.58 & 0.9560 & 7.6e-6 & 50.44 & 1361.29 \\ \hline
		\ChangeRT{0.5pt}	
		\CH{SemA-ML} & \CH{300} &   \CH{0.12} &   \CH{	0.8742} & \CH{8.02-5}&  \CH{30.30	}  &   \CH{6.55} \\ \hline
	\end{tabular}
}
\end{table}
\footnotetext{\CH{In this and the following tables, the parameter $n_{bs}$ is omitted for ML-CW, being always equal to 10.}}

\begin{table}[bh!]
	\centering
	\caption{Results for VOC2012  classification with random choice of the target label vector (5 bits out of 20 are flipped).}
	\label{TAB:ML-5bit}
	\resizebox{\columnwidth}{!}{
\renewcommand\arraystretch{1.2}
\begin{tabular}{|l|c|c|c|c|c|c|}
		\hline 
\ChangeRT{0.5pt}
		& Parameters & ASR & bASR & MSE   & $\bar{n}_{it}$ & Time(s)    \\ \hline 
		\multirow{3}{*}{JMA}
         & (0.05, 200) &  0.49 &0.9032  & 1.2e-5 & 129.85 & 139.31 \\ \cline{2-7}
		 &(0.1, 200)  &  0.82 &0.9687  & 1.8e-5 & 110.21 & 106.98 \\   \cline{2-7}
		 & (0.2, 200) &  0.95 &0.9942   & 3.0e-5 & 87.96  & 79.05 \\ \hline 
\ChangeRT{0.5pt}
		ML-C\&W & (0.01, 1000) & 0.90 & 0.9883 & 2.9e-5 & 113.35 & 420.21 \\ \hline 
\ChangeRT{0.5pt}
		MLA-LP & 100 & 0.00 & 0.4264  & NA & NA & NA \\ \hline
				\CH{SemA-ML} & \CH{300} &   \CH{0.00} &   \CH{	0.7412} &  \CH{ NA }  &   \CH{ NA } & \CH{ NA } \\ \hline
	\end{tabular}}
\end{table}

\begin{table}[bh!]
	\centering
	\caption{Results for VOC2012 classification with random choice of the target label vector (10 bits out of 20 are flipped).}
	\label{TAB:ML-10bit}
	\resizebox{\columnwidth}{!}{
\renewcommand\arraystretch{1.2}
\begin{tabular}{|l|c|c|c|c|c|c|}
		\hline 
		& Parameters & ASR  & bASR &  MSE   & $\bar{n}_{it}$ & Time(s)    \\ \hline 
\ChangeRT{0.5pt}
		\multirow{3}{*}{JMA}
		& (0.05,200) & 0.01 & 0.5714  & 1.5e-5 & 161.00 & 157.41 \\ \cline{2-7}
		 & (0.1, 500) & 0.56 & 0.8892& 4.3e-5  & 312.42  & 397.42 \\ \cline{2-7}
		 & (0.5, 500) &  0.88 & 0.9805 &   2.7e-4 & 245.24 &  278.02 \\ \hline
\ChangeRT{0.5pt}
		\multirow{6}{*}{ML-C\&W}
		 & (0.01,  1000) & 0.295& 0.8697 &3.2e-5&270.17&498.63 \\ \cline{2-7}
		 & (0.01, 2000) & 0.4186 & 0.9156 & 3.6e-5 & 451.78& 939.53\\ \cline{2-7}
		  & (0.1,  2000) & 0.475 & 0.9147 &  3.1e-5& 433.75& 934.58\\ \cline{2-7}
		  & (0.5,  2000) & 0.475 & 0.9147 & 3.3e-5 & 430.96& 908.92\\ \cline{2-7}
		 & (0.1, 3000)  & 0.575 & 0.9331 &  3.3e-5  & 581.53  & 1461.05  \\ \cline{2-7}
		 & (0.1, 5000)  & 0.72 & 0.9557 &  3.4e-5  & 775.97  & 2183.79  \\ \hline 
 \ChangeRT{0.5pt}
 		MLA-LP & 100 & 0.00 & 0.5167  &NA & NA & NA \\ \hline
 				\CH{SemA-ML} & \CH{300} &   \CH{0.00} &   \CH{	0.4994} &  \CH{NA}  &   \CH{NA} & \CH{NA} \\ \hline

	\end{tabular}}
\end{table}

\begin{table}[bh!]
	\centering
	\caption{Results for VOC2012  classification  with random choice of the target label vector. Case of 20 bit errors (all labels are changed by the attack).
}
	\label{TAB:ML-20bit}
	\resizebox{\columnwidth}{!}{
\renewcommand\arraystretch{1.2}
\begin{tabular}{|l|c|c|c|c|c|c|}
		\hline 
		& Parameters & ASR & bASR  & MSE &  $\bar{n}_{it}$ & Time(s)   \\ \hline 
		\multirow{7}{*}{JMA}
& (0.05, 200) &  0 & 0.1109      & NA  & NA & NA\\ \cline{2-7}
&(0.1, 200)  &  0 & 0.1276       & NA &  NA & NA \\   \cline{2-7}
& (0.2, 200) &  0 & 0.1475        & NA & NA & NA \\ \cline{2-7} 
& (0.5, 200) &  0 & 0.1845       &  NA & NA & NA \\  \cline{2-7} 
& (0.5, 500) & 0 & 0.3288   &  NA  & NA & NA \\  \cline{2-7}  
& (0.5, 1000) &  0.03 & 0.5068  & 8.2e-4  & 938.16 & 980.82 \\ \cline{2-7}
& (0.5, 2000) &  0.26 & 0.6795  & 1.1e-3 & 1484.34 & 1677.49 \\ \hline
\multirow{2}{*}{ML-C\&W }
 & (0.01, 10, 1000) & 0 & 0.0659  &  NA  & NA & NA \\ \cline{2-7} 
& (0.5, 10, 2000) & 0 & 0.0882  & NA & NA & NA \\ \hline 
	\end{tabular}}
\end{table}


\begin{table}[bh!]
	\centering
	\caption{\CH{Results for VOC2012 using different architectures 
    (target label vector from the training set):  
    (a) ResNet50, (b) ViT-B/16 and (c) CLIP+MLP. 
    }}
	\label{TAB:ML-others-Real}
	%
			\CH{
    	    \begin{subtable}[h]{1\columnwidth}
	\centering
						\resizebox{\columnwidth}{!}{
		\begin{tabular}{|l|c|c|c|c|c|c|c|c|}
			\hline 
			& Parameters & ASR & bASR &  MSE   & SSIM&  $\bar{n}_{it}$ & Time(s)  \\ \hline 
			\ChangeRT{0.5pt}
			\multirow{3}{*}{JMA}
			& (0.05, 200) & 1.00 & 1.0000 & 3.3e-6 & 0.999 & 32.50 & 2.88 \\\cline{2-8}
			& (0.1, 200) & 1.00 & 1.0000 & 3.8e-6 & 0.999 & 21.96 & 1.60 \\\cline{2-8}
			& (0.5, 200) & 1.00 & 1.0000 & 1.4e-5 & 0.993 & 12.84 & 0.79 \\  \cline{2-8}
			& (1, 200) &   1.00 & 1.0000 & 4.0e-5 & 0.982 & 11.84 &  0.80 \\ \hline 
			\ChangeRT{0.5pt}
			ML-C\&W & (0.01,1000) & 0.58 & 0.9583 & 3.1e-5 & 0.989 &  761.00 &  307.27\\ \hline 
			\ChangeRT{0.5pt}
			LOTS & 2000  & 0.83 & 0.9610 & 2.0e-2 & 0.386 & 34.84 & 138.20 \\ \hline
			\ChangeRT{0.5pt}
			MLA-LP & 100 & 0.22 & 0.9500 & 3.5e-4 & 0.964 & 10.98 & 648.39 \\ \hline
			\ChangeRT{0.5pt}	
			\CH{SemA-ML} & \CH{300} &  0.31 & 0.9167 & 9.3e-5 & 0.969 & 5.30 & 5.45 \\ \hline
		\end{tabular}
	}
			\caption{\CH{ResNet50}}
	\label{tab:ResNet}
		\end{subtable}
    \begin{subtable}[h]{1\columnwidth}
	\centering
	\resizebox{\columnwidth}{!}{
				\begin{tabular}{|l|c|c|c|c|c|c|c|c|}
					\hline 
					& Parameters & ASR & bASR &  MSE   & SSIM&  $\bar{n}_{it}$ & Time(s)  \\ \hline 
					\ChangeRT{0.5pt}
					\multirow{3}{*}{JMA}
					& (0.05, 200) & 0.99 & 0.9993 & 4.4e-5 & 0.990 & 26.01 & 3.89 \\ \cline{2-8}
					& (0.1, 200) &  1.00 & 1.0000 & 4.4e-5 & 0.990 & 17.65 & 2.22 \\ \cline{2-8}
					& (0.5, 200) & 1.00 & 1.0000 & 3.9e-5 & 0.991 & 9.49 & 0.85 \\ \cline{2-8}
					& (1, 200) & 1.00 & 1.0000 & 4.6e-5 & 0.989 & 8.48 & 0.69 \\ \hline 
					\ChangeRT{0.5pt}
					ML-C\&W & (0.01, 1000) & 0.81 & 0.9825 & 3.3e-5 & 0.992 & 635.34 & 996.74 \\ \hline 
					\ChangeRT{0.5pt}
					LOTS & 2000  & 0.84 & 0.9750 & 1.5e-2 & 0.603 & 38.00 & 217.00 \\ \hline
					\ChangeRT{0.5pt}
					MLA-LP & 100 & 0.52 & 0.9705 & 6.0e-4 & 0.910 & 5.79 & 787.87 \\ \hline
					\ChangeRT{0.5pt}	
					\CH{SemA-ML} & \CH{300} & 0.16 & 0.8835 & 8.6e-5 & 0.974 & 17.91 & 11.52\\ \hline
				\end{tabular}
			}
						\caption{\CH{ViT-B-16}}
			\label{tab:ViT}
		\end{subtable}
	  \begin{subtable}[h]{1\columnwidth}
		\centering
		\resizebox{\columnwidth}{!}{
			\begin{tabular}{|l|c|c|c|c|c|c|c|c|}
				\hline 
				& Parameters & ASR & bASR &  MSE   & SSIM&  $\bar{n}_{it}$ & Time(s)  \\ \hline 
				\ChangeRT{0.5pt}
				\multirow{3}{*}{JMA}
				& (0.05, 200) & 1.00 & 1.0000 & 1.6e-6 & 0.999 & 28.65 & 4.46 \\ \cline{2-8}
				& (0.1, 200) & 1.00 & 1.0000 & 2.3e-6 & 0.999 & 19.72 & 2.81 \\ \cline{2-8}
				& (0.5, 200) & 1.00 & 1.0000 & 1.2e-5 & 0.994 & 11.88 & 1.30 \\   \cline{2-8}
				& (1, 200) & 1.00 & 1.0000 & 3.5e-5 & 0.984 & 11.07 & 1.13 \\ \hline 
				\ChangeRT{0.5pt}
				ML-C\&W & (0.01, 1000) & 0.14 & 0.9095 & 4.8E-05 & 0.983 & 684.75 & 1613.82 \\ \hline 
				\ChangeRT{0.5pt}
				LOTS & 2000  & 0.97 & 0.9818 & 1.3E-02 & 0.496 & 19.59 & 106.17 \\ \hline
				\ChangeRT{0.5pt}
				MLA-LP & 100 & 0.11 & 0.9400 & 2.1E-04 & 0.977 & 5.57 & 710.85 \\ \hline
				\ChangeRT{0.5pt}	
				\CH{SemA-ML} & \CH{300} &  0.09 & 0.8788 & 9.9E-05 & 0.973 & 1.00 & 21.84\\ \hline
			\end{tabular}
		}
		\caption{\CH{CLIP+MLP}}
		\label{tab:CLIP}
	\end{subtable}
}
\end{table}

\begin{table}[bh!]
	\centering
	\caption{\CH{Results for VOC2012, for a 10-bit flipping attack (10 bits out of 20 are flipped). 
	(a) ResNet50, (b) ViT-B/16 and (c) CLIP+MLP.
	}}
	\label{TAB:ML-others- 10bit}
		\CH{
	 \begin{subtable}[h]{1\columnwidth}
	\resizebox{\columnwidth}{!}{
\begin{tabular}{|l|c|c|c|c|c|c|c|}
	\hline 
	& Parameters & ASR  & bASR &  MSE   & SSIM & $\bar{n}_{it}$ & Time(s)    \\ \hline 
	\ChangeRT{0.5pt}
	\multirow{3}{*}{JMA}
	& (0.05, 200) & 0.00 & 0.5483 & NA & NA & NA	 & NA \\ \cline{2-8} 
	& (0.1, 500) & 0.94 & 0.9780 & 4.8e-5 & 0.984 & 291.08 & 25.76 \\ \cline{2-8}
	& (0.5, 500) & 1.00 & 1.0000 & 2.4e-4 & 0.916 & 166.60 & 14.66 \\ \hline 
	\ChangeRT{0.5pt}
	\multirow{4}{*}{ML-C\&W}
	& (0.01, 1000) & 0.37 & 0.8805 & 5.0e-4 & 0.884 & 631.78 & 275.72 \\ \cline{2-8}
	& (0.5, 2000) & 0.59 & 0.8808 & 1.1e-1 & 0.155 & 1108.74 & 1154.25 \\ \cline{2-8}
	& (0.1, 3000)  & 0.42 & 0.8710 & 1.6e-2 & 0.464 & 581.57 & 1747.41 \\ \cline{2-8}
	& (0.1, 5000)  & 0.45 & 0.8690 & 1.6e-2 & 0.470 & 756.03 & 3340.60 \\ \hline 
	\ChangeRT{0.5pt}
	MLA-LP & 100 & 0.00 & 0.5625 & NA & NA & NA & NA \\ \hline
	\CH{SemA-ML} & \CH{300} & 0.00 & 0.4980 & NA & NA & NA & NA \\ \hline
	\end{tabular}
}
		\caption{\CH{ResNet50}}
\label{tab:ResNet-10}
\end{subtable}
	 \begin{subtable}[h]{1\columnwidth}
	\resizebox{\columnwidth}{!}{
\begin{tabular}{|l|c|c|c|c|c|c|c|}
	\hline 
	& Parameters & ASR  & bASR &  MSE   & SSIM & $\bar{n}_{it}$ & Time(s)    \\ \hline 
	\ChangeRT{0.5pt}
	\multirow{3}{*}{JMA}
	& (0.05, 200) & 0.65 &  0.9073 &  1.1e-4 &  0.974 &  154.64 &  25.00 \\ \cline{2-8}
	& (0.1, 500) & 1.00 &  1.0000 &  1.6e-4 &  0.967 &  126.64 &  19.87 \\ \cline{2-8}
	& (0.5, 500) & 1.00 &  1.0000 &  2.6e-4 &  0.948 &  63.89 &  9.79 \\ \hline 
	\ChangeRT{0.5pt}
	\multirow{4}{*}{ML-C\&W}
	& (0.01, 1000) & 0.51 &  0.8680 &  5.5e-4 &  0.903 &  582.63 &  996.45 \\ \cline{2-8}
	& (0.5, 2000) & 0.74 &  0.8888 &  5.6e-2 &  0.272 &  358.84 &  3230.31 \\ \cline{2-8}
	& (0.1,  3000)  & 0.62 &  0.8445 &  1.2e-2 &  0.584 &  912.28 &  4074.83 \\ \cline{2-8}
	& (0.1, 5000)  &0.66 &  0.8445 &  1.9e-2 & 0.565 & 1497.76 &  6674.43 \\ \hline 
	\ChangeRT{0.5pt}
	MLA-LP & 100 & 0.00 &  0.5635 &  NA &  NA &  NA &  NA \\ \hline
	\CH{SemA-ML} & \CH{300} & 0.00 &  0.4995 &  NA &  NA &  NA &  NA \\ \hline
\end{tabular}
	}
	\caption{\CH{ViT-B/16}}
	\label{tab:VIT-10}
\end{subtable}
	 \begin{subtable}[h]{1\columnwidth}
	\resizebox{\columnwidth}{!}{
\begin{tabular}{|l|c|c|c|c|c|c|c|}
	\hline 
	& Parameters & ASR  & bASR &  MSE   & SSIM & $\bar{n}_{it}$ & Time(s)    \\ \hline 
	\ChangeRT{0.5pt}
	\multirow{3}{*}{JMA}
	& (0.05, 200) & 1.00 & 0.9975 & 3.8e-6 & 0.998 & 59.16 & 8.70 \\ \cline{2-8}
	& (0.1, 500) & 1.00 & 1.0000 & 5.8e-6 & 0.997 & 41.45 & 5.86 \\ \cline{2-8}
	& (0.5, 500) & 1.00 & 1.0000 & 3.3e-5 & 0.986 & 22.73 & 2.79 \\ \hline 
	\ChangeRT{0.5pt}
	\multirow{4}{*}{ML-C\&W}
	& (0.01, 1000) & 0.51 & 0.8680 & 5.5e-4 & 0.903 & 582.63 & 2052.44 \\ \cline{2-8}
	& (0.5, 2000) & 0.74 & 0.8888 & 5.6e-2 & 0.272 & 358.84 & 3600.40 \\ \cline{2-8}
	& (0.1,  3000)  & 0.62 & 0.8445 & 1.2e-2 & 0.584 & 912.28 & 4149.43 \\ \cline{2-8}
	& (0.1, 5000)  & 0.66 & 0.8445 & 1.9e-2 & 0.565 & 1497.76 & 4867.13 \\ \hline 
	\ChangeRT{0.5pt}
	MLA-LP & 100 & 0.00 & 0.5755 & NA & NA & NA & NA \\ \hline
	\CH{SemA-ML} & \CH{300} &  0.00 & 0.4970 & NA & NA & NA & NA \\ \hline
\end{tabular}
	}
	\caption{\CH{CLIP+MLP}}
	\label{tab:CLIP-10}
\end{subtable}
}
\end{table}

\subsection{\CH{Multi-label classification (VOC2012 - ResNet50, ViT-B/16, CLIP+MLP)}}
\label{sec.furtherExp}



\CH{To validate the effectiveness of JMA against different types of architectures, we also trained 
a residual network (ResNet50), a Transformer-based architecture (ViT-B/16) and a network based on pre-trained vision-language models (CLIP), obtained by adding a lightweight classification head - namely a Multi-Layer Perceptron - on top of CLIP features (CLIP+MLP).}
\CH{The models were trained using the multi-label cross-entropy loss with Adam optimizer with lr = $10^{-3}$, batch size of 32 samples, 
and early stopping based on the mAP computed on the validation set.
The ResNet50, ViT-B/16, and CLIP+MLP models obtained a mAP score on the test set of 0.94, 0.87, and 0.90, respectively.}

\CH{Table \ref{TAB:ML-others-Real} and  \ref{TAB:ML-others- 10bit} report the results respectively in the case of attack target vector chosen randomly from the training set and in the case where 10 bit at random are modified in the label vector. 
For a more comprehensive evaluation on the quality of  attacked images and the amount of distortion introduced by the attack, in the tables we also report the SSIM in addition to MSE.
We see that the state-of-the-art methods, especially ML-C\&W and  MLA-LP, achieve worse performance than in the InceptionV3 case, while LOTS is more effective. With regard to JMA, performance are very good and  ASR = 1 can be obtained in all the cases and in both attack settings.  Furthermore, JMA is the attack introducing the lowest distortion, with an MSE 
in the range $[10^{-6}, 10^{-4}]$. 
The SSIM is also very good, being always above 0.99 in the first attack setting, and remaining good also in the challenging 10-bit flips attack. Compared to the InceptionV3 case,  JMA takes more iterations to run ($\bar{n}_{it}$ is larger), 
however the algorithm is extremely fast and the attacked image can be found in a few seconds in most cases.}

\begin{table}[bh!]
	\centering
	\caption{\CH{Results of the 10-bit flipping attack on MS-COCO dataset.
		}}
	\label{TAB:ML-10bit-COCO}
	\CH{
		\begin{subtable}[h]{1\columnwidth}
			\resizebox{\columnwidth}{!}{
				\begin{tabular}{|l|c|c|c|c|c|c|c|}
					\hline 
					& Parameters & ASR  & bASR &  MSE   & SSIM & $\bar{n}_{it}$ & Time(s)    \\ \hline 
					\ChangeRT{0.5pt}
					\multirow{3}{*}{JMA}
					& (0.05,200) & 1.00 & 1.0000 & 7.7e-6 & 0.997 & 53.80 & 76.86 \\ \cline{2-8}
					& (0.1, 500) & 1.00 & 1.0000 & 9.5e-6 & 0.997 & 38.59 & 51.66 \\ \cline{2-8}
					& (0.5, 500) & 1.00 & 1.0000 & 4.4e-5 & 0.984 & 30.25 & 40.24   \\ \hline 
					\ChangeRT{0.5pt}
					\multirow{3}{*}{ML-C\&W}
					& (0.01, 1000) & 0.00 & 0.7649 & NA & NA & NA & NA \\ \cline{2-8}
					& (0.5, 2000) & 0.00 & 0.7984 & NA & NA & NA & NA\\ \cline{2-8}
					& (0.1,  3000)  & 0.00 & 0.7578 & NA & NA & NA & NA \\ \cline{2-8}
					\ChangeRT{0.5pt}
					MLA-LP & 100 & 0.00 & 0.8892 & NA & NA & NA & NA  \\ \hline
					\CH{SemA-ML} & 300 & 0.00 & 0.8518 & NA & NA & NA & NA \\ \hline
				\end{tabular}
			}
			\caption{\CH{Swin-B}}
			\label{tab:COCO-swin}
		\end{subtable}
		\begin{subtable}[h]{1\columnwidth}
			\resizebox{\columnwidth}{!}{
				\begin{tabular}{|l|c|c|c|c|c|c|c|}
					\hline 
					& Parameters & ASR  & bASR &  MSE   & SSIM & $\bar{n}_{it}$ & Time(s)    \\ \hline 
					\ChangeRT{0.5pt}
					\multirow{3}{*}{JMA}
					& (0.05, 200) & 0.46 & 0.9177 & 8.9e-6 & 0.996 & 149.17 & 98.73 \\ \cline{2-8}
					& (0.1, 500) & 1.00 & 1.000 & 1.9e-5 & 0.993 & 163.08 & 109.78 \\ \cline{2-8}
					& (0.5, 500) & 1.00 & 1.000 & 1.4e-4 & 0.957 & 101.66 & 66.38\\ \hline 
					\ChangeRT{0.5pt}
					\multirow{3}{*}{ML-C\&W}
					& (0.01, 1000) & 0.21 & 0.9373 & 5.3e-4 & 0.905 & 320.93 & 1377.91 \\ \cline{2-8}
					& (0.5, 2000) & 0.11 & 0.9476 & 1.1e-1 & 0.222 & 853.68 & 3154.67 \\ \cline{2-8}
					& (0.1,  3000)  & 0.19 & 0.9331 & 2.1e-2 & 0.481 & 793.53 & 4507.12 \\ \cline{2-8}
					\ChangeRT{0.5pt}
					MLA-LP & 100 & 0.00 & 0.8816 & NA & NA & NA & NA  \\ \hline
					\CH{SemA-ML} & \CH{300} & 0.00 & 0.8596 & NA & NA & NA & NA \\ \hline
				\end{tabular}
			}
			\caption{\CH{CLIP+MLP}}
			\label{tab:COCO-clip}
		\end{subtable}
	}
\end{table}

\begin{table}[bh!]
	\centering
	\caption{\CH{Results of the 10-bit flipping attack on NUS-WIDE dataset (the network is CLIP+MLP).}}
	\label{TAB:ML-10bit-NUS-WIDE}
	\CH{
		\resizebox{\columnwidth}{!}{
			\begin{tabular}{|l|c|c|c|c|c|c|c|}
				\hline 
				& Parameters & ASR  & bASR &  MSE   & SSIM & $\bar{n}_{it}$ & Time(s)    \\ \hline 
				\ChangeRT{0.5pt}
				\multirow{3}{*}{JMA}
				& (0.05,200) & 0.99 & 0.9975 & 5.1e-6 & 0.998 & 81.65 & 48.80 \\ \cline{2-8}
				& (0.1, 500) & 1.00 & 1.00 & 7.8e-6 & 0.996 & 59.96 & 35.23 \\ \cline{2-8}
				& (0.5, 500) & 1.00 & 1.00 & 5.3e-5  & 0.976 & 36.53 & 20.35 \\ \hline 
				\ChangeRT{0.5pt}
				\multirow{3}{*}{ML-C\&W}
				& (0.01, 1000) & 0.00 & 0.7559 & NA & NA & NA & NA\\ \cline{2-8}
				& (0.5, 2000) & 0.00 & 0.7612& NA & NA & NA & NA \\ \cline{2-8}
				& (0.1,  3000)  & 0.00 & 0.7070& NA & NA & NA & NA \\ \cline{2-8}
				\ChangeRT{0.5pt}
				MLA-LP & 100 & 0.00 & 0.8843  & NA &  NA & NA & NA \\ \hline
				\CH{SemA-ML} & \CH{300} & 0.00 &  0.8630 &  NA & NA  &   NA & NA \\ \hline
			\end{tabular}
		}
	}
\end{table}

\begin{table}[bh!]
	\centering
	\caption{\BT{Comparison between JMA and SemA-ML \cite{Mahmood_2024_CVPR} on the NUS-WIDE dataset in the case of semantic consistent multi-label attacks. 
	}}
	\label{TAB:comparisonCVPR}
	\CH{
		\begin{subtable}[h]{1\columnwidth}
			\resizebox{\columnwidth}{!}{
				\begin{tabular}{|l|c|c|c|c|c|c|c|}
					\hline 
					\ChangeRT{0.5pt}
					& Parameters & ASR & bASR & MSE &SSIM &   $\bar{n}_{it}$ & Time(s)    \\ \hline 
                    \multirow{2}{*}{JMA}
                    & (0.05, 200) & 1.00 & 1.0000 & 1.2e-6 & 0.999 & 26.98 & 14.16 \\ \cline{2-8}
                    & (0.1, 500) & 1.00 & 1.0000 & 1.8e-6 & 0.999 & 19.57 & 9.39 \\ \cline{2-8}
                    & (0.5, 500) & 1.00 & 1.0000 & 1.2e-5 & 0.994 & 12.33 & 4.76 \\ \hline
                    \CH{SemA-ML} & \CH{300} & \CH{0.20} & \CH{0.9568} & \CH{7.2e-5} & \CH{0.978} & \CH{36.43} & \CH{5.10} \\ \hline
				\end{tabular}
			}
			\caption{\CH{Target label vector from training dataset. The average number of bits that the attack is required to flip is 4.37 bit.}}
			\label{tab.comparison-a}
		\end{subtable}
		\begin{subtable}[h]{1\columnwidth}
			\resizebox{\columnwidth}{!}{
				\begin{tabular}{|l|c|c|c|c|c|c|c|}
					\hline 
					\ChangeRT{0.5pt}
                    & Parameters & ASR & bASR & MSE &SSIM &   $\bar{n}_{it}$ & Time(s)    \\ \hline 
                    \multirow{2}{*}{JMA}
                    & (0.05, 200) & 1.00 & 1.0000 & 6.6e-7 & 1.000 & 19.03 & 8.84 \\ \cline{2-8}
                    & (0.1, 500) & 1.00 & 1.0000 & 1.0e-6 & 0.999 & 14.17 & 5.77 \\ \cline{2-8}
                    & (0.5, 500) & 1.00 & 1.0000 & 6.6e-6 & 0.996 & 9.58 & 2.88 \\ \hline
                    \CH{SemA-ML} & \CH{300} & \CH{0.48} & \CH{0.9709} & \CH{5.8e-5} & \CH{0.975} & \CH{13.65} & \CH{4.20} \\ \hline
				\end{tabular}
			}
			\caption{\CH{Attack removing all labels/bits. The average number of bits that  the attack is required to flip is 2.90 bit.} 
            }
			\label{tab.comparison-b}
		\end{subtable}
	}
\end{table}

\subsection{\CH{Multi-label classification (MS-COCO, NUS-WIDE)}}
\label{sec.furtherExpbis}

\CH{To validate the effectiveness of JMA against more challenging datasets, we considered the MS-COCO  \cite{lin2014microsoft}  and NUS-WIDE  \cite{NUS-WIDE}  
datasets, containing 80 and 81 labels respectively, resulting in a huge number of possible label vectors. For the MS-COCO dataset, we  trained a  Swin-B  architecture (which obtained a better mAP than the ViT-B/16) and CLIP+MLP, with input size of $224 \times 224$ pixels. The models were trained on the original split of the dataset using the Asymmetric Loss (ASL) proposed in \cite{benbaruch2020asymmetric}, with Adam optimizer, lr = $10^{-3}$ and batch size of 32 samples. 
Early stopping on the validation set was employed based on mAP. 
The Swin-B and CLIP+MLP models scored an mAP of about 0.82 and 0.81, which is aligned with benchmarks for similar input shapes}.
\CH{For the NUS-WIDE case, we split the dataset into 129,431 training, 32,358 validation, and 107,759 test images.
A CLIP+MLP architecture was trained, using the same learning setting as for MS-COCO,
obtaining a mAP score of 0.59, which is aligned with banchmarks.
}


\CH{Table \ref{TAB:ML-10bit-COCO} reports the results on MS-COCO in the  10-bit flipping attack scenario. We see that the architecture has a noticeable impact on the  performance of the attacks. For the state-of-the-art methods, attacking the Swin-B network is more difficult. 
Specifically, the bASR values  achieved by ML-C\&W  range from 0.75 to 0.80 and from 0.93 to  0.95 across the various parameter settings respectively in the Swin-B and CLIP+MLP case, thus being similar to the VOC2012 case. However, the ASR is much lower and only very few images can be fully attacked in the CLIP+MLP case, while in the case of Swin-B the  ASR is 0.
On the contrary, MLA-LP and SemA-ML work better on MS-COCO  w.r.t. the VOC2012 case and  the bASR is higher. 
Regarding JMA, in both cases the attack can reach ASR = 1 with low attack distortion,  thus surpassing the state-of-the-art. In the CLIP+MLP case, the attack takes more time to run and the distortion introduced by the attack is larger, yet the SSIM remains above 0.99.}

\CH{For the NUS-WIDE dataset, the results of the 10-bit flipping attack are reported in Table \ref{TAB:ML-10bit-NUS-WIDE}. We see that the ASR is always 0 for the state-of-the-art methods, while attacking with JMA resulted in ASR = 1. Results in terms of distortion and computational efficiency  are similar to the best results on MS-COCO, achieved using Swin-B.
}

\CH{It is worth observing that SemA-ML is a multi-label attack which enforces that semantic consistency among the labels 
is retained by the attack.
Differently from the case of VOC2012 and MS-COCO, in the NUS-WIDE dataset,  labels 
correspond to both categories and subcategories (e.g. , `vehicle'  and `car'  are distinct labels).
Hence, semantic consistence may not be verified when bits are flipped at random (e.g.,  the label `car' may be turned on while the label `vehicle' remains off, leading to a semantically inconsistent vector).
Hence, to compare JMA with SemA-ML in a setting that satisfies this method's working assumptions, we also ran some tests considering the following simple attack settings, 
where  semantic consistency is satisfied:
i) target label vector chosen at random from the training/test set; ii) target vector with all labels/bits set to zero. The results in these cases are reported in Table \ref{TAB:comparisonCVPR}.   SemA-ML can now attack most of the labels and reach the target in some cases (and the ASR is no longer 0). This is especially the case 
when the attack is asked to set all the bits to zero (in this case, the attack 
	involves flipping a lower number of bits - 2.90 on the average).
    In this case, SemA-ML can achieve bASR = 0.971 and ASR = 0.48. In any case, the performance of JMA  remains largely superior, achieving ASR = 1 in all the settings with a lower MSE and an SSIM above 0.99 (still with a lower complexity).
}


\subsection{\CH{JMA as a one-step attack}}
\label{sec.dis}

Experiments show that JMA requires a small number of iterations to find an adversarial example, much smaller than the other algorithms, and gets an ASR that is generally higher (in some cases, much higher) for a similar distortion.
It is worth stressing that the number of updates $\bar{n}_{it}$ reported in the tables takes into account also the steps of the binary search carried out at the end of the iterations (see Section \ref{sec.attackParam}).
Specifically,   $it = (v-1) + n_{bs}$, where $v$ is the number of `for' loops, that is, the number of image updates (see Algorithm 1), and $n_{bs}$  is set equal  to $6$.  Therefore,
JMA can indeed find an adversarial image in {\em one shot} in 
a considerable number of cases, see for instance Table \ref{C4-TAB:ECOC-CIFAR} for CIFAR-10 ECOC classification and Table \ref{TAB:ML-Real} for the multi-label classification, where  $\bar{n}_{it}$ is lower than 7 for the  attack settings with larger $\epsilon$\footnote{Note that, even when a one-shot attack is possible, a perturbation applied to the image  with a  weak strength $\epsilon$ may not result in an adversarial image, thus requiring that more iterations  are run.}.
This confirms a-posteriori the validity of one of the most basic assumptions underlying our approach, namely the local linear behavior, that, when it holds, allows to perform the attack in one-shot.
When the linear approximation does not hold in the close vicinity of the initial point, and possibly of other  points during the updates, the perturbation is added with a small $\epsilon$ in order to change the initial point, and the Jacobian is recomputed (see Algorithm 1). By inspecting the tables (see for instance Table \ref{TAB:one-hot}), we can see that, when this happens, it is often preferable to use a not-too-small $\epsilon$ so that the point is moved farther from the initial point, where the behavior of the local function can be closer to linear, instead of remaining in the vicinity of the initial point where the local approximation may still be not satisfied.
Doing so, a high ASR can be achieved  with a reduced number of iterations and a similar distortion.

\section{\CH{Concluding remarks}}
\label{sec.con}

We have proposed a  general, nearly optimal, targeted attack, that can solve the original formulation of the adversarial attack by Szegedy et al. under a first order approximation of the network function.
The method resorts to
the minimization of a Mahalanobis distance term, which
depends on the Jacobian matrix taking into account the effort
necessary to move the feature representation of the image in a given direction.
By exploiting the Wolfe
duality, the minimization problem is reduced to a
non-negative least square (NNLS) problem, that can be solved
numerically.
The experiments show that the JMA attack is effective against a wide variety of DNNs adopting different output encoding schemes, including
networks using error correction output encoding (ECOC) and in particular multi-label classification,  outperforming existing attacks in terms of higher ASR, lower distortion and lower complexity,
with attack capabilities far exceeding those of existing attacks.
JMA remains effective also  in the case of one-hot encoding, with much reduced computational complexity with respect for instance  to the  C\&W attack.

\CH{
Finally, we observe that JMA can inspire new defences.
In particular, given its efficiency, JMA can be used to reduce the load of adversarial training and develop effective adversarial training defences \cite{goodfellow2014explaining, wu2017adversarial,kuang2024defense}.
Furthermore, while with the exception of scattered works (\cite{wu2017adversarial}) adversarial training is primarly limited to applications of single-label classification, 
JMA can be used to implement effective adversarial training  also for other types of classifiers (e.g. multi-label classifiers).
In addition, thanks to its efficiency,  JMA could also be used to perform adversarial training of provably robust multi-label classifiers exploiting randomized smoothing \cite{jia2022multiguard}. 
In fact, similarly to what has been done for standard single-label randomized smoothing classifiers (\cite{salman2019provably}),
by training with JMA attacked samples it could be possible to boost the provable robustness of smoothed multi-label classifiers.
}
Future work could also focus on the extension of JMA to a black-box attack scenario, to develop powerful targeted attacks with certain transferability properties.

\bibliographystyle{IEEEtran}
\bibliography{bare_jrnl.bib}

\end{document}